\newtheorem{theorem}{Theorem}
\newtheorem{lemma}{Lemma}
\newtheorem{definition}{Definition}
\newcommand{\nc}[2]{\newcommand{#1}{#2}}
\newcommand{\rnc}[2]{\renewcommand{#1}{#2}}
\newcommand{\undb}[2]{\underbrace{#1}_\text{#2}}
\newcommand{\lrp}[1]{\left(#1\right)}
\newcommand{\lrs}[1]{\left[#1\right]}
\newcommand{\lrc}[1]{\left\{#1\right\}}
\newcommand{\given}{\middle\vert}
\DeclareMathOperator{\E}{\mathbb{E}}
\DeclareMathOperator*{\argmax}{arg\,max}
\nc{\fa}{\forall}\nc{\te}{\exists}\nc{\pa}{\partial}\nc{\inft}{\infty}\nc{\imp}{\implies}\nc{\ds}{\displaystyle}\nc{\spc}{\text{\,\,\,\,}}\nc{\Spc}{\,\,\,\,\,\,\,\,}
\rnc{\l}{\ell }\nc{\R}{\mathbb{R} }\nc{\Ha}{\mathcal{H} }\nc{\La}{\mathcal{L} }\nc{\nb}{\vc{\nabla}}\nc{\tm}{\times}\nc{\Z}{\mathbb{Z}}\nc{\arrow}{\longrightarrow}\rnc{\mod}{\text{mod }}\nc{\kb}{(\SI{1.380E-23}{J K^{-1}})}\nc{\hb}{(\SI{1.055E-34}{J s})}\rnc{\c}{(\SI{3.00E8}{m s^{-1}})}\nc{\sq}{\Box}\rnc{\dag}{\dagger}\nc{\Sol}{\textbf{Solution:}}\nc{\done}{\;\blacksquare}\nc{\ua}{\uparrow}\nc{\da}{\downarrow}\nc{\rank}{\text{rank}}\nc{\Tr}{\text{tr}}\nc{\iidsim}{\stackrel{i.i.d}{\sim}}\rnc{\P}{\mathbb{P}}
\title{Sparse Tree Search Optimality Guarantees in POMDPs with Continuous Observation Spaces}
\author{
Michael H. Lim$^1$
\and
Claire J. Tomlin$^1$\And
Zachary N. Sunberg$^{2}$
\affiliations
$^1$University of California, Berkeley\\
$^2$University of Colorado, Boulder
\emails
michaelhlim@berkeley.edu,
tomlin@eecs.berkeley.edu,
zachary.sunberg@colorado.edu
}
\begin{document}

\doparttoc 
\faketableofcontents 

\maketitle

\begin{abstract}
    Partially observable Markov decision processes (POMDPs) with continuous state and observation spaces have powerful flexibility for representing real-world decision and control problems but are notoriously difficult to solve.
    Recent online sampling-based algorithms that use observation likelihood weighting have shown unprecedented effectiveness in domains with continuous observation spaces.
    However there has been no formal theoretical justification for this technique.
    This work offers such a justification, proving that a simplified algorithm, partially observable weighted sparse sampling (POWSS), will estimate Q-values accurately with high probability and can be made to perform arbitrarily near the optimal solution by increasing computational power.
\end{abstract}
{\let\thefootnote\relax\footnote{{Proceedings of The 29th International Joint Conference on Artificial Intelligence and the 17th Pacific Rim International Conference on Artificial Intelligence (IJCAI-PRICAI 2020). The sole copyright holder is IJCAI (International Joint Conferences on Artificial Intelligence), all rights reserved. Full link to the paper: \href{https://www.ijcai.org/Proceedings/2020/0572.pdf}{https://www.ijcai.org/Proceedings/2020/0572.pdf}. }}}

\section{Introduction}

The partially observable Markov decision process (POMDP) is a flexible mathematical framework for representing sequential decision problems where knowledge of the state is incomplete \cite{kaelbling1998planning,kochenderfer2015decision,bertsekas2005dynamic}.
The POMDP formalism can represent a wide range of real world problems including autonomous driving \cite{sunberg2017value,bai2015intention}, cancer screening \cite{ayer2012mammography}, spoken dialog systems \cite{young2013pomdp}, and others \cite{cassandra1998survey}.
In one of the most successful applications, an approximate POMDP solution is being used in a new aircraft collision avoidance system that will be deployed worldwide~\cite{holand2013optimizing}.

A POMDP is an optimization problem for which the goal is to find a policy that specifies actions that will control the state to maximize the expectation of a reward function.
One of the most popular ways to deal with the challenging computational complexity of finding such a policy~\cite{papadimitriou1987complexity} is to use online tree search algorithms \cite{Silver2010,ye2017despot,sunberg2017value,kurniawati2016online}.
Instead of attempting to find a global policy that specifies actions for every possible outcome of the problem, \emph{online} algorithms look for local approximate solutions as the agent is interacting with the environment.

\begin{figure}[t]
    \centering
    \includegraphics[width=0.45\columnwidth]{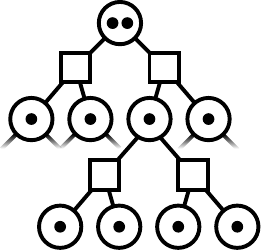}
    \includegraphics[width=0.45\columnwidth]{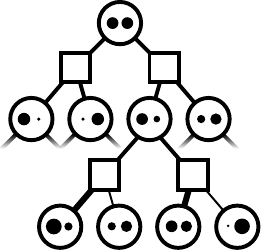}
    \caption{Trees generated from partially observable sparse sampling (POSS) algorithm (left), and partially observable weighted sparse sampling (POWSS) algorithm (right) with depth $D=2$ and width $C=2$, for a continuous-observation POMDP. Nodes below the fading edges are omitted for clarity. Square nodes correspond to actions, filled circles to state particles with size representing weight, and unfilled circles to beliefs.}
    \label{fig:trees}
\end{figure}

Previous online approaches such as ABT \cite{kurniawati2016online}, POMCP \cite{Silver2010}, and DESPOT \cite{ye2017despot} have exhibited good performance in large discrete domains.
However, many real-world domains, notably when a robot interacts with the physical world, have continuous observation spaces, and the algorithms mentioned above will not always converge to an optimal policy in problems with continuous or naively discretized observation spaces \cite{Sunberg2017}.

Two recent approaches, POMCPOW \cite{Sunberg2017} and DESPOT-$\alpha$ \cite{Garg2019}, have employed a weighting scheme inspired by particle filtering to achieve good performance on realistic problems with large or continuous observation spaces. However, there are currently no theoretical guarantees that these algorithms will find optimal solutions in the limit of infinite computational resources.

A convergence proof for these algorithms must have the following two components: (1) A proof that the particle weighting scheme is sound, and (2) a proof that the heuristics used to focus search on important parts of the tree are sound.
This paper tackles the first component by analyzing a new simplified algorithm that expands every node of a sparse search tree.

First, we naively extend the sparse sampling algorithm of Kearns \textit{et al.}~\shortcite{Kearns2002} to the partially observable domain in the spirit of POMCP and explain why this algorithm, known as partially observable sparse sampling (POSS), will converge to a suboptimal solution when the observation space is continuous.
Then, we introduce appropriate weighting that results in the partially observable weighted sparse sampling (POWSS) algorithm.
We prove that the value function estimated by POWSS converges to the optimal value function at a rate of $\mathcal{O}(C^{D}\exp(-t\cdot C))$, where $C$ is the planning width and number of particles, $D$ is the depth, and $t$ is a constant specific to the problem and desired accuracy.
This yields a policy that can be made arbitrarily close to optimal by increasing the computation.

To our knowledge, POWSS is the first algorithm proven to converge to a globally optimal policy for POMDPs with continuous observation spaces without relying on any discretization schemes.\footnote{Edit (06/03/2023): This characterization is not accurate, since there exist other algorithms that previously have shown convergence for continuous observation POMDPs \cite{bai2014integrated}, and those that utilize particle weighting for importance sampling \cite{luo2019importance}. However, we believe it is the first theoretical analysis of online POMDP tree search algorithms that use particle filters weighted using the observation density.}
Since POWSS fully expands all nodes in the sparse tree, it is not computationally efficient and is only practically applicable to toy problems.
However, the convergence guarantees justify the weighting schemes in state-of-the-art efficient algorithms like DESPOT-$\alpha$ and POMCPOW that solve realistic problems by only constructing the most important parts of the search tree.

The remainder of this paper proceeds as follows:
First, \cref{sec:prelim,sec:related} review preliminary definitions and previous work.
Then, \cref{sec:algorithms} presents an overview of the POSS and POWSS algorithms.
Next, \cref{sec:is} contains an importance sampling result used in subsequent sections. 
\Cref{sec:maintheory} contains the main contribution, a proof that POWSS converges to an optimal policy using induction from the leaves to the root to prove that the value function estimate will eventually be accurate with high probability at all nodes. 
Finally, \cref{sec:experiments} empirically shows convergence of POWSS on a modified tiger problem \cite{kaelbling1998planning}. 

\section{Preliminaries} \label{sec:prelim}

\subsubsection{POMDP Formulation}
A POMDP is defined by a 7-tuple $(S,A,O,\mathcal{T},\mathcal{Z},R,\gamma)$: $S$ is the state space, $A$ is the action space, $O$ is the observation space, $\mathcal{T}$ is the transition density $\mathcal{T}(s'|s,a)$, $\mathcal{Z}$ is the observation density $\mathcal{Z}(o|a,s')$, $R$ is the reward function, and $\gamma \in [0,1)$ is the discount factor \cite{kochenderfer2015decision,bertsekas2005dynamic}. 
Since a POMDP agent receives only observations, the agent infers the state by maintaining a belief $b_t$ at each step $t$ and updating it with new action and observation pair $(a_{t+1},o_{t+1})$ via Bayesian updates \cite{kaelbling1998planning}. 
A policy, denoted with $\pi$, maps beliefs $b_t$ generated from histories $h_t = (b_0,a_1,o_1,\cdots,a_t,o_t)$ to actions $a_t$. 
Thus, to maximize the expected cumulative reward in POMDPs, the agent wants to find the optimal policy $\pi^*(b_t)$.

We solve the finite-horizon problem of horizon length $D$. We formulate the state value function $V$ and action value function $Q$ for a given belief state $b$ and policy $\pi$ at step $t$ by Bellman updates for $t \in [0,D-1]$, where $bao$ indicates the belief $b$ updated with $(a,o)$:
\begin{align}
  V_t^\pi(b) &= \E\lrs{\sum_{i=t}^{D-1} \gamma^{i-t} R(s_i, \pi(s_i)) \given b},\; V_D^\pi(b) = 0\\
  Q_t^\pi(b,a) &= \E\lrs{R(s,a) + \gamma V_{t+1}^\pi(bao)| b}
\end{align}
Specifically, the optimal value functions satisfy the following:
\begin{align}
  V_t^*(b) &= \max_{a \in A}Q_t^*(b,a)\\
  \pi_t^*(b) &= \argmax_{a\in A}Q_t^*(b,a)\\
  Q_t^*(b,a) &= \E\lrs{R(s,a) + \gamma V_{t+1}^*(bao)| b}
\end{align}

\paragraph{Generative models.} 
For many problems, the probability densities $\mathcal{T}$ and $\mathcal{Z}$ may be difficult to determine explicitly. Thus, some approaches only require that samples are generated according to the correct probability. In this case, a generative model $G$ implicitly defines $\mathcal{T},\mathcal{Z}$ by generating a new state, $s'$, observation, $o$, and reward, $r$, given the current state $s$ and action $a$.

\paragraph{Probability notation.}
We denote probability measures with calligraphic letters (e.g. $\mathcal{P},\mathcal{Q}$) to avoid confusion with the action value function $Q(b,a)$. Furthermore, for two probability measures $\mathcal{P},\mathcal{Q}$ defined on a $\sigma$-algebra $\mathcal{F}$, we denote $\mathcal{P} \ll \mathcal{Q}$ to state that $\mathcal{P}$ is absolutely continuous with respect to $\mathcal{Q}$; for every measurable set $A,$ $\mathcal{Q}(A) = 0$ implies that $\mathcal{P}(A)=0$. 
Also, we use the abbreviations ``a.s.'' for almost surely, and ``i.i.d.r.v.'' for independent and identically distributed random variables.

\section{Additional Related Work} \label{sec:related}

In addition to the work mentioned in the introduction, there has been much work in similar areas.
There are several online tree search techniques for fully observable Markov decision processes with continuous state spaces, most prominently Sparse-UCT \cite{bjarnason2009lower}, and double progressive widening \cite{couetoux2011double}.

There are also several approaches for solving POMDPs or belief-space MDPs with continuous observation spaces.
For example, Monte Carlo Value Iteration (MCVI) can use a classifier to deal with continuous observation spaces \cite{bai2014integrated}. Others partition the observation space \cite{hoey2005solving} or assume that the most likely observation is always received \cite{platt2010belief}.
Other approaches are based on motion planning \cite{bry2011rapidly,agha2011firm}, locally optimizing pre-computed trajectories \cite{van2012motion}, or optimizing open-loop plans \cite{sunberg2013information}.
McAllester and Singh~\shortcite{mcallester1999approximate} also extend the sparse sampling algorithm of Kearns \textit{et al.}~\shortcite{Kearns2002}, but they use a belief simplification scheme instead of the particle sampling scheme.

\section{Algorithms} \label{sec:algorithms}
We first define the algorithmic elements shared by POSS and POWSS, \textsc{SelectAction} and \textsc{EstimateV}, in \cref{alg:general}.
\textsc{SelectAction} is the entry point of the algorithm, which selects the best action for a belief $b_0$ according to the $Q$-function by recursively calling \textsc{EstimateQ}.
\textsc{EstimateV} is a subroutine that returns the value, $V$, for an estimated belief, by calling \textsc{EstimateQ} for each action and returning the maximum.
We use belief particle set $\bar{b}$ at every step $d$, which contain pairs $(s_i,w_i)$ that correspond to the generated sample and its corresponding weight.
The weight at initial step is uniformly normalized to $1/C$, as the samples are drawn directly from $b_0$.
In \cref{alg:general,alg:poss,alg:powss}, we omit $\gamma,G,C,D$ in the subsequent recursive calls for convenience since they are fixed globally.

\begin{algorithm}[t]
\algrenewcommand\algorithmicprocedure{\textbf{algorithm}}
\caption{Routines common to POWSS and POSS}\label{alg:general}
\textbf{Algorithm:} SelectAction($b_0, \gamma, G, C, D$)\\
\textbf{Input:} Belief state $b_0$, discount $\gamma,$ generative model $G,$ width $C,$ max depth $D$.\\
\textbf{Output:} An action $a^*$.
\begin{algorithmic}[1]
\State From the initial belief distribution $b_0$, sample $C$ particles and store it in $\bar{b}_0$. For POWSS, weights are also initialized with $w_i = 1/C$.
\State For each of the actions $a \in A$, calculate:
\begin{align*}
  \hat{Q}_0^*(\bar{b}_0,a) &= \textsc{EstimateQ}(\bar{b}_0, a, 0)
\end{align*}
\State Return $a^* = \argmax_{a \in A} \hat{Q}_0^*(\bar{b}_0,a)$
\end{algorithmic}

\textbf{Algorithm:} EstimateV($\bar{b}, d$)\\
\textbf{Input:} Belief particles $\bar{b}$, current depth $d$.\\
\textbf{Output:} A scalar $\hat{V}^*_d(\bar{b})$ that is an estimate of $V^*_d(b)$.
\begin{algorithmic}[1]
\State If $d \geq D$ the max depth, then return 0.
\State For each of the actions $a \in A$, calculate:
\begin{align*}
  \hat{Q}^*_d(\bar{b},a) &= \textsc{EstimateQ}(\bar{b}, a, d)
\end{align*}
\State Return $\hat{V}^*_d(\bar{b}) = \max_{a \in A} \hat{Q}^*_d(\bar{b},a)$
\end{algorithmic}
\end{algorithm}

We define \textsc{EstimateQ} functions in Algorithm \ref{alg:poss} for POSS and Algorithm \ref{alg:powss} for POWSS, where both methods perform sampling and recursive calls to \textsc{EstimateV} to estimate the $Q$-function at a given step. 
The crucial difference between these algorithms is shown in \cref{fig:trees}.

POSS naively samples the next $s_i',o_i,r_i$ via the generating function for each state $s_i$ in the belief particle set $\bar{b}$, at a given step $d$. 
Then, for each unique observation $o_j$ generated from the sampling step, POSS inserts the states $s'_i$ into the next-step belief particle set $\overline{bao_j}$ only if the generated observation $o_i$ matches $o_j$.
This behavior is similar to POMCP, DESPOT, or a particle filter that uses rejection and can quickly lead to particle depletion when there are many unique observations.
Finally, POSS returns the naive average of the $Q$-functions calculated via recursive calculation of \textsc{EstimateV} for each of the next-step beliefs.

On the other hand, POWSS uses particle weighting rather than using only unweighted particles with matching observation histories as in POSS. 
POWSS samples the next $s_i',o_i,r_i$ via the generating function for each state-weight pair $(s_i,w_i)$ in the belief particle set $\bar{b}$. 
Now, for each observation $o_j$ generated from the sampling step, POWSS inserts all the states $s'_i$ and the new weights $w_i' = w_i \cdot Z(o_j|a,s_i')$ into the next-step belief particle set $\overline{bao_j}$.
These weights are the adjusted probability of hypothetically obtaining $o_j$ from state $s_i'$.
POWSS then returns the weighted average of the $Q$-functions.

\begin{algorithm}[t]
\algrenewcommand\algorithmicprocedure{\textbf{algorithm}}
\caption{POSS}\label{alg:poss}
\textbf{Algorithm:} EstimateQ($\bar{b}, a, d$)\\
\textbf{Input:} Belief particles $\bar{b}$, action $a$, current depth $d$.\\
\textbf{Output:} A scalar $\hat{Q}^*_d(\bar{b},a)$ that is an estimate of $Q_d^*(b,a)$.
\begin{algorithmic}[1]
\State For each particle $s_i$ in $\bar{b}$, generate $s_i',o_i,r_i = G(s_i,a)$. 
If $i > |\bar{b}|$, use $s_{i\, \mod |\bar{b}|}$. 
\label{line:sample}
\State For each unique observation $o_j$ from previous step, insert all $s_i'$ that satisfy $o_i = o_j$ to a new belief particle set $\overline{bao_j}$.
\State Return
\end{algorithmic}
\begin{align*}
  &\hat{Q}^*_d(\bar{b},a) = \frac{1}{C}\sum_{i = 1}^{C} (r_i+\gamma \cdot\textsc{EstimateV}(\overline{bao_i}, d+1))
\end{align*}
\end{algorithm}

\begin{algorithm}[t]
\algrenewcommand\algorithmicprocedure{\textbf{algorithm}}
\caption{POWSS}\label{alg:powss}
\textbf{Algorithm:} EstimateQ($\bar{b}, a, d$)\\
\textbf{Input:} Belief particles $\bar{b}$, action $a$, current depth $d$.\\
\textbf{Output:} A scalar $\hat{Q}^*_d(\bar{b},a)$ that is an estimate of $Q_d^*(b,a)$.
\begin{algorithmic}[1]
\State For each particle-weight pair $(s_i,w_i)$ in $\bar{b}$, generate $s_i',o_i,r_i$ from $G(s_i,a)$.
\State For each observation $o_j$ from previous step, iterate over $i=\{1,\cdots,C\}$ to insert $(s_i', w_i \cdot Z(o_j|a,s_i'))$ to a new belief particle set $\overline{bao_j}$.
\State Return
\end{algorithmic}
\begin{align*}
  &\hat{Q}^*_d(\bar{b},a) = \frac{\sum_{i = 1}^C w_i(r_i+\gamma\cdot\textsc{EstimateV}(\overline{bao_i}, d+1))}{\sum_{i = 1}^C w_i}
\end{align*}
\end{algorithm}

\section{Importance Sampling} \label{sec:is}

We begin the theoretical portion of this work by stating important properties about self-normalized importance sampling estimators (SN estimators). 
One goal of importance sampling is to estimate an expected value of a function $f(x)$ where $x$ is drawn from distribution $\mathcal{P}$, while the estimator only has access to another distribution $\mathcal{Q}$ along with the importance weights $w_{\mathcal{P}/\mathcal{Q}}(x) \propto \mathcal{P}(x)/\mathcal{Q}(x)$.
This is crucial for POWSS because we wish to estimate the reward for beliefs conditioned on observation sequences, while only being able to generate the marginal distribution of states with correct probability for an action sequence.

We define the following quantities:
\begin{align}
  \tilde{w}_{\mathcal{P}/\mathcal{Q}}(x) &\equiv \frac{w_{\mathcal{P}/\mathcal{Q}}(x)}{\sum_{i=1}^N w_{\mathcal{P}/\mathcal{Q}}(x_i)} \tag{SN Importance Weight}\\
  d_\alpha(\mathcal{P}||\mathcal{Q}) &\equiv \E_{x \sim \mathcal{Q}}[w_{\mathcal{P}/\mathcal{Q}}(x)^\alpha] \tag{R\'enyi Divergence}\\
  \tilde{\mu}_{\mathcal{P}/\mathcal{Q}} &\equiv \sum_{i=1}^N \tilde{w}_{\mathcal{P}/\mathcal{Q}}(x_i) f(x_i) \tag{SN Estimator}
\end{align}

\begin{theorem}[SN $d_{\inft}$-Concentration Bound]\label{thm:sn}
  Let $\mathcal{P}$ and $\mathcal{Q}$ be two probability measures on the measurable space $(\mathcal{X},\mathcal{F})$ with $\mathcal{P} \ll \mathcal{Q}$ and $d_{\inft}(\mathcal{P}||\mathcal{Q}) < +\inft$. Let $x_1,\cdots, x_N$ be i.i.d.r.v. sampled from $\mathcal{Q}$, and $f: \mathcal{X} \to \R$ be a bounded Borel function ($\norm{f}_\inft < + \inft$). Then, for any $\lambda > 0$ and $N$ large enough such that $\lambda > \norm{f}_{\inft}  d_{\inft}(\mathcal{P}||\mathcal{Q})/\sqrt{N}$, the following bound holds with probability at least $1-3\exp(-N\cdot t^2(\lambda,N))$:
  \begin{align} 
    |\E_{x\sim \mathcal{P}}[f(x)] - \tilde{\mu}_{\mathcal{P}/\mathcal{Q}}| &\leq \lambda
  \end{align} 
  where $t(\lambda,N)$ is defined as:
  \begin{align}
    t(\lambda,N) &\equiv \frac{\lambda}{\norm{f}_{\inft}d_{\inft}(\mathcal{P}||\mathcal{Q})}-\frac{1}{\sqrt{N}}
  \end{align}
\end{theorem}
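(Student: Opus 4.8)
The plan is to reduce the self-normalized estimator to two ordinary sample averages of bounded i.i.d.\ quantities and apply Hoeffding's inequality to each. Throughout I write $w_i \equiv w_{\mathcal{P}/\mathcal{Q}}(x_i)$, $\mu \equiv \E_{x\sim\mathcal{P}}[f(x)]$, and $c \equiv \norm{f}_\inft\, d_\inft(\mathcal{P}||\mathcal{Q})$. Since the self-normalized estimator is invariant to the scale of the weights, I may assume $w_{\mathcal{P}/\mathcal{Q}}$ is the Radon--Nikodym derivative $d\mathcal{P}/d\mathcal{Q}$ exactly, so that $\E_{\mathcal{Q}}[w] = 1$ and $\E_{\mathcal{Q}}[wf] = \mu$. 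The single structural fact I would extract from the hypotheses is that $d_\inft(\mathcal{P}||\mathcal{Q})$ is the essential supremum of the weight, so that $0 \le w_i \le d_\inft(\mathcal{P}||\mathcal{Q})$ holds $\mathcal{Q}$-a.s.; this almost-sure boundedness is exactly what makes Hoeffding applicable, and it is the only place the assumption $d_\inft < \inft$ is used. First I would write the error as a ratio of sample averages,
\begin{align}
  \tilde{\mu}_{\mathcal{P}/\mathcal{Q}} - \mu = \frac{\frac{1}{N}\sum_{i=1}^N w_i\lrp{f(x_i)-\mu}}{\frac{1}{N}\sum_{i=1}^N w_i} \equiv \frac{A_N}{B_N},
\end{align}
where $\E[A_N]=0$ and $\E[B_N]=1$, the numerator terms are bounded in absolute value by $2c$, and the denominator terms lie in $[0, d_\inft(\mathcal{P}||\mathcal{Q})]$. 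Because $B_N>0$, this identity shows that controlling $|A_N|$ together with a \emph{lower} bound on $B_N$ is enough to control the ratio.

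Next, for a threshold $\beta \in (0,1)$ I would split
\begin{align}
  \P\lrp{\left|\tilde{\mu}_{\mathcal{P}/\mathcal{Q}}-\mu\right| > \lambda} \le \P\lrp{|A_N| > \lambda\beta} + \P\lrp{B_N < \beta}.
\end{align}
The first probability is bounded by a two-sided Hoeffding inequality on the mean-zero average $A_N$, contributing two exponential tails, and the second by a one-sided Hoeffding inequality on $B_N$, whose mean is $1$, contributing the third tail. This $2+1$ structure is the source of the factor $3$ in the statement. Choosing $\beta$ so that the numerator and denominator Hoeffding exponents coincide, and simplifying, then collapses the three tails into the common rate $\exp(-N\,t^2(\lambda,N))$.

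The main obstacle, and the reason the bound is only finite-sample valid, is the random denominator $B_N$: unlike an ordinary importance-sampling average, the self-normalized estimator is a ratio, so its error can be large precisely when $B_N$ is atypically small. The real work is therefore (i) certifying that $B_N$ stays above a positive threshold with the correct exponential probability, and (ii) tracking the Hoeffding constants so that the numerator and denominator rates combine into exactly $t(\lambda,N) = \lambda/c - 1/\sqrt{N}$. I expect this constant bookkeeping, rather than the concentration idea itself, to be the delicate step: the additive $-1/\sqrt{N}$ correction and the hypothesis $\lambda > \norm{f}_\inft\, d_\inft(\mathcal{P}||\mathcal{Q})/\sqrt{N}$ are exactly what force the admissible denominator threshold to be positive and guarantee $t(\lambda,N) > 0$, which is the finite-$N$ analogue of requiring the weights to have concentrated before the ratio can be certified accurate.
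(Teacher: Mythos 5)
Your decomposition into $A_N/B_N$ with a $2{+}1$ Hoeffding split is internally sound as far as it goes: the event inclusion $\{|A_N|\le\lambda\beta\}\cap\{B_N\ge\beta\}\subseteq\{|\tilde{\mu}_{\mathcal{P}/\mathcal{Q}}-\mu|\le\lambda\}$ is correct, both $A_N$ and $B_N$ are i.i.d.\ averages of bounded terms, and an exponential-in-$N$ bound with a factor of $3$ does follow. But this is a genuinely different route from the paper, and the last step of your plan --- ``choosing $\beta$ so that the numerator and denominator Hoeffding exponents coincide \ldots collapses the three tails into the common rate $\exp(-N\,t^2(\lambda,N))$'' --- is exactly where it breaks. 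The paper never lower-bounds the random denominator. Instead it (i) bounds the \emph{bias} of the SN estimator via Cauchy--Schwarz, $|\E_{x\sim\mathcal{P}}[f(x)]-\E_{x\sim\mathcal{Q}}[\tilde{\mu}_{\mathcal{P}/\mathcal{Q}}]|\le \norm{f}_\inft\sqrt{(d_2(\mathcal{P}||\mathcal{Q})-1)/N}\le \norm{f}_\inft d_\inft(\mathcal{P}||\mathcal{Q})/\sqrt{N}$, and (ii) concentrates $\tilde{\mu}_{\mathcal{P}/\mathcal{Q}}$ around its own (biased) mean at the shifted threshold $\tilde{\lambda}=\lambda-\norm{f}_\inft d_\inft(\mathcal{P}||\mathcal{Q})/\sqrt{N}$, the factor $3$ arising as one one-sided tail plus one two-sided tail. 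So the $-1/\sqrt{N}$ in $t(\lambda,N)$ is a bias correction, not a denominator-positivity condition; your closing interpretation misidentifies its origin, and your scheme, which involves no bias term at all, has no mechanism that would produce that particular correction.

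Quantitatively, the claimed collapse cannot happen. With your $c=\norm{f}_\inft d_\inft(\mathcal{P}||\mathcal{Q})$, each term $w_i(f(x_i)-\mu)$ lies in $[-2c,2c]$, so two-sided Hoeffding gives
\begin{align*}
\P\lrp{|A_N|>\lambda\beta}\le 2\exp\lrp{-\frac{N\lambda^2\beta^2}{8c^2}},
\qquad
\P\lrp{B_N<\beta}\le \exp\lrp{-\frac{2N(1-\beta)^2}{d_\inft^2(\mathcal{P}||\mathcal{Q})}},
\end{align*}
and since $\beta<1$ the first exponent is at most $N\lambda^2/(8c^2)$ no matter how $\beta$ is tuned (balancing against the denominator tail only forces $\beta$ smaller still). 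The theorem's exponent $N\,t^2(\lambda,N)$ tends to $N\lambda^2/c^2$ as soon as $N$ is moderately large relative to the assumed threshold $\lambda>c/\sqrt{N}$, so over most of the admissible $(\lambda,N)$ range your bound is weaker by a constant factor of roughly $8$ or more in the exponent. What your route salvages is Theorem 1 with a degraded rate, $t'(\lambda,N)=\kappa\lambda/c-\mathcal{O}(1/\sqrt{N})$ for some $\kappa<1$ --- qualitatively the same exponential decay, which would actually suffice for the downstream union bounds in Theorems 2--3 after rescaling $t_{\max}$, but it does not prove the statement with the stated $t(\lambda,N)$; for the exact constant you need the paper's bias-shift argument. (One point in your favor: reducing to two genuine i.i.d.\ averages makes your Hoeffding applications unimpeachable, whereas the paper applies Hoeffding directly to the ratio statistic $\tilde{\mu}_{\mathcal{P}/\mathcal{Q}}$, which is not an i.i.d.\ sum and strictly requires an additional argument such as bounded differences.)
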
 
\cref{thm:sn} builds upon the derivation in Proposition D.3 of Metelli \textit{et al.}~\shortcite{Metelli2018}, which provides a polynomially decaying bound by assuming $d_2$ exists. Here, we compromise by further assuming that the infinite R\'enyi divergence $d_{\inft}$ exists and is bounded to get an exponentially decaying bound: $d_{\inft}(\mathcal{P}||\mathcal{Q}) = \text{ess}\sup_{x \sim \mathcal{Q}} w_{\mathcal{P}/\mathcal{Q}}(x) < +\inft$. The proof of \cref{thm:sn} is given in \cref{app:theorem1}.

This exponential decay is important for the proofs in \cref{sec:powss}. We need to ensure that all nodes of the POWSS tree at all depths $d$ reach convergence. The branching of the tree induces a factor proportional to $N^{D}$. To offset this, we need a probabilistic bound at each depth that decays exponentially with $N$. Intuitive explanation of the $d_{\infty}$ assumption is given at the beginning of \cref{sec:powss}.

\section{Convergence} \label{sec:maintheory}

\subsection{POSS Convergence to QMDP}

We present a short informal argument for the convergence of the $Q$-value estimates of POSS to the QMDP value (\cref{def:qmdp}) in continuous observation spaces.
Sunberg and Kochenderfer~\shortcite{Sunberg2017} provide a formal proof for a similar algorithm.

\begin{definition}[QMDP value] \label{def:qmdp}
    Let $Q_{\textsc{MDP}}(s,a)$ denote the optimal $Q$-function evaluated at state $s$ and action $a$ for the fully observable MDP relaxation of a POMDP.
    Then, the \emph{QMDP value} at belief $b$, $Q_{\textsc{MDP}}(b,a)$, is $\E_{s \sim b}\left[Q_{\textsc{MDP}}(s,a)\right]$.
\end{definition}
Since the observations $o_i$ are drawn from a continuous distribution, the probability of obtaining duplicate $o_i$ values in \textsc{EstimateQ}, line~\ref{line:sample} is 0.
Consequently, when evaluating \textsc{EstimateQ}, all the belief particle sets after the root node only contain a single state particle each (\cref{fig:trees}, left), which means that each belief node is merely an alias for a unique state particle.
Therefore, \textsc{EstimateV} performs a rollout exactly as if the current state became entirely known after taking a single action, identical to the QMDP approximation.
Since QMDP is sometimes suboptimal \cite{kaelbling1998planning}, POSS is suboptimal for some continuous-observation POMDPs.

\subsection{Near-Optimality of POWSS} \label{sec:powss}
On the other hand, we claim that the POWSS algorithm can be made to perform arbitrarily close to the optimal policy by increasing the width $C$. 

In analyzing near-optimality of POWSS, we view POWSS $Q$-function estimates as SN estimators, and we apply the concentration inequality result from \cref{thm:sn} to show that POWSS estimates at every node have small errors with high probability. 
Through the near-optimality of the $Q$-functions, we conclude that the value obtained by employing POWSS policy is also near-optimal with further assumptions on the closed-loop POMDP system.

\subsubsection{Assumptions for Analyzing POWSS}
The following assumptions are needed for the proof:
\begin{compactenum}[(i)]
    \item $S$ and $O$ are continuous spaces, and the action space has a finite number of elements, $|A|<+\inft$. \label{req:space}
    \item For any observation sequence $\{o_n\}_j$, the densities $\mathcal{Z},\mathcal{T},b_0$ are chosen such that the R\'enyi divergence of the target distribution $\mathcal{P}^d$ and sampling distribution $\mathcal{Q}^d$ (\cref{eq:pqdef,eq:pqdef2}) is bounded above by $d_{\inft}^{\max} < +\inft$ a.s. for all $d = 0,\cdots,D-1$: 
      $$d_{\inft}(\mathcal{P}^d||\mathcal{Q}^d) = \text{ess sup}_{x \sim \mathcal{Q}^d}w_{\mathcal{P}^d/\mathcal{Q}^d}(x) \leq d_{\inft}^{\max}$$
      \label{req:Renyi}
    \item The reward function $R$ is Borel and bounded by a finite constant $||R||_{\inft} \leq R_{\max} < +\inft$ a.s., and $V_{\max} \equiv \frac{R_{\max}}{1-\gamma}<+\infty$. \label{req:r}
    \item We can sample from the generating function $G$ and evaluate the observation probability density $\mathcal{Z}$. \label{req:generate}
    \item The POMDP terminates after $D < \infty$ steps. \label{req:finite}
\end{compactenum}
Intuitively, condition (\ref{req:Renyi}) means that the ratio of the conditional observation probability to the marginal observation probability cannot be too high. 
Additionally, our results still hold even when either of $S,O$ are discrete, as long as it doesn't violate condition (\ref{req:Renyi}), by appropriately switching the integrals to Riemann sums.

While we restrict our analysis to the case when $\gamma<1$ for a finite horizon problem, the authors believe that similar results can be derived for either when $\gamma = 1$ or when dealing with infinite horizon problems.

\begin{theorem}[Accuracy of POWSS Q-Value Estimates]\label{thm:powss}
  Suppose conditions (\ref{req:space})-(\ref{req:finite}) are satisfied. Then, for a given $\epsilon >0$, choosing constants $C,\lambda,\delta$ that satisfy:
  \begin{align}
    \lambda &= \epsilon(1-\gamma)^2/5,\; \delta = \lambda/(V_{\max}D(1-\gamma)^2)\\
    \delta &\geq 3|A|(3|A|C)^{D}\exp(-C\cdot t^2_{\max})\\
    t_{\max}&(\lambda,C) = \frac{\lambda}{3V_{\max} d_{\inft}^{\max}} - \frac{1}{\sqrt{C}} > 0
  \end{align}
  The $Q$-function estimates obtained for all depths $d=0,\cdots,D-1$ and all actions $a$ are near-optimal with probability at least $1 - \delta$: 
  \begin{align}
      \left|Q^*_d(b_d,a) - \hat{Q}^*_d(\bar{b}_d,a) \right| &\leq \frac{\lambda}{1-\gamma}
    \end{align}
\end{theorem}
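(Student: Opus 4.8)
The plan is to prove the bound by backward induction on the depth $d$, from the leaves ($d=D$) to the root ($d=0$), at each node viewing the POWSS weighted average $\hat{Q}^*_d(\bar{b}_d,a)$ as a self-normalized importance sampling estimator and invoking \cref{thm:sn}. First I would define a single good event $\mathcal{E}$ on which \emph{every} self-normalized estimator produced anywhere in the tree meets its concentration bound simultaneously. Since each \textsc{EstimateQ} call branches into $C$ observations and each belief node invokes \textsc{EstimateQ} for all $|A|$ actions, there are $(|A|C)^d$ belief nodes at depth $d$ and at most $|A|\sum_{d=0}^{D-1}(|A|C)^d = \mathcal{O}((|A|C)^D)$ self-normalized estimators in total. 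A union bound over these, each failing with probability at most $3\exp(-C\,t_{\max}^2)$ by \cref{thm:sn} under assumption (\ref{req:Renyi}), then gives $\P(\mathcal{E}^c) \leq \delta$ for the stated $\delta$; here the \emph{exponential} decay of \cref{thm:sn} is precisely what allows the per-node failure probability to overcome the $(|A|C)^D$ growth in the number of nodes.

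Working deterministically on $\mathcal{E}$, I would then run the induction. The base case $d=D$ is immediate because $V^*_D = \hat{V}^*_D = 0$. For the inductive step, writing $\eta_d$ for the uniform $Q$-error bound at depth $d$, I would insert an intermediate estimator $\bar{Q}_d$ that averages the \emph{true} one-step-plus-future quantity $R(s_i,a) + \gamma V^*_{d+1}(b_d a o_i)$ against the POWSS weights $w_i$, and split
\begin{align*}
  |Q^*_d(b_d,a) - \hat{Q}^*_d(\bar{b}_d,a)| \leq |Q^*_d(b_d,a) - \bar{Q}_d| + |\bar{Q}_d - \hat{Q}^*_d(\bar{b}_d,a)|.
\end{align*}
The first term is the deviation of a self-normalized estimator of the fixed, bounded Borel function $R + \gamma V^*_{d+1}$ (bounded by a constant multiple of $V_{\max}$, which is what the factor $3V_{\max}$ in $t_{\max}$ accommodates) from its true posterior expectation $Q^*_d(b_d,a)$, so \cref{thm:sn} bounds it by $\lambda$ on $\mathcal{E}$. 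The second term is a weighted average of the per-child errors $\gamma(V^*_{d+1} - \hat{V}^*_{d+1})$, hence at most $\gamma\max_i |V^*_{d+1}(b_d a o_i) - \hat{V}^*_{d+1}(\overline{bao_i})|$, which the inductive hypothesis controls by $\gamma\eta_{d+1}$ (value errors inherit the $Q$-error since $V = \max_a Q$). This produces the recursion $\eta_d \leq \lambda + \gamma\eta_{d+1}$ with $\eta_D = 0$, telescoping to $\eta_d \leq \lambda\sum_{k=0}^{D-1-d}\gamma^k \leq \lambda/(1-\gamma)$ for every $d$, which is the claim.

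The main obstacle I anticipate is the mismatch between \cref{thm:sn}, which requires a \emph{deterministic} bounded function and i.i.d.\ samples from $\mathcal{Q}$, and the recursive structure, in which the quantity actually averaged contains the \emph{random} estimate $\hat{V}^*_{d+1}$ built from deeper, mutually dependent samples. Introducing the true $V^*_{d+1}$ as the intermediary is exactly what decouples these: the concentration step then sees only a fixed function, while all randomness of $\hat{V}^*_{d+1}$ is pushed into the recursion term and absorbed by the inductive hypothesis. Making this rigorous requires checking that, conditioned on the path to any node, the propagated state particles are i.i.d.\ draws from the marginal sampling law $\mathcal{Q}^d$ and that the POWSS weights are exactly the unnormalized importance weights $w_{\mathcal{P}^d/\mathcal{Q}^d}$, so that \cref{thm:sn} may be applied conditionally at each node before the union bound is taken. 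A secondary delicacy is verifying that the side conditions of \cref{thm:sn} (namely $t_{\max}(\lambda,C) > 0$ and $C$ large enough) hold uniformly across the tree via the single worst-case divergence bound $d_\infty^{\max}$ from assumption (\ref{req:Renyi}).
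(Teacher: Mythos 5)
Your proposal is correct and shares the paper's overall skeleton---viewing each $\hat{Q}^*_d$ as a self-normalized estimator, union-bounding over every estimator in the tree so that the exponential decay in \cref{thm:sn} beats the $(|A|C)^D$ node count, and running backward induction with the recursion $\eta_d \leq \lambda + \gamma \eta_{d+1}$---but your inductive-step decomposition is genuinely different and leaner than the paper's. The paper splits the per-step error into a reward term (A) and a future-value term (B), then splits (B) into three pieces, invoking \cref{thm:sn} \emph{three} times per node--action pair: for the reward, for the SN estimate of the conditional mean $\mathbf{V}^*_{d+1}(s_{d,i},b_d,a) = \E[V^*_{d+1}(b_dao) \mid s_{d,i}]$ (a deterministic function of the state sequence alone), and for the zero-mean Monte Carlo correction $\Delta_{d+1} = \mathbf{V}^*_{d+1} - V^*_{d+1}(b_dao_i)$; this triple use is the origin of the $3^D$ in the paper's $(3|A|C)^D$ factor and of the $3V_{\max}$ in $t_{\max}$. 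You instead make a \emph{single} SN application per node--action to $f = R(s_d,a) + \gamma V^*_{d+1}(b_dao)$, whose target expectation is exactly $Q^*_d(b_d,a)$; your union bound then covers only $|A|\sum_{d}(|A|C)^d$ events, comfortably within the stated $\delta$, and since $\|f\|_{\infty} \leq R_{\max} + \gamma V_{\max} = V_{\max} \leq 3V_{\max}$ the stated $t_{\max}$ remains valid (indeed slack). The one step you gesture at but must make explicit is why \cref{thm:sn}---which requires a deterministic bounded function of i.i.d.\ samples---applies to your first term, since $f$ depends on the freshly generated pair $(s_{d+1,i}, o_i)$ and not only on the state sequence drawn from $\mathcal{Q}^d$. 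The fix is to take the sample to be the extended tuple $x_i = (\{s_{0:d}\}_i, s_{d+1,i}, o_i)$ and extend both measures by the common kernel $\mathcal{T}(s_{d+1}|s_d,a)\mathcal{Z}(o|a,s_{d+1})$: because the kernel is shared, the importance weights are unchanged (still proportional to $\mathcal{Z}^{i,j}_{1:d}$, i.e.\ exactly the POWSS weights $w_{d,i}$) and $d_{\infty}$ between the extended measures equals $d_{\infty}(\mathcal{P}^d||\mathcal{Q}^d) \leq d_{\infty}^{\max}$, so assumption (\ref{req:Renyi}) transfers verbatim. This is precisely the fact that buys your simpler two-term split; the paper avoids needing it by introducing $\mathbf{V}^*_{d+1}$ and the zero-mean $\Delta_{d+1}$, at the cost of two extra concentration applications per node. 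With that spelled out, the rest of your argument (base case $\eta_D = 0$ reproducing the paper's leaf case $\alpha_{D-1} = \lambda$, value errors inheriting $Q$-errors through the max, and the telescoped bound $\lambda/(1-\gamma)$) matches \cref{lemma:leaf,lemma:step} and completes the theorem.
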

\begin{theorem}[POWSS Policy Convergence] \label{thm:powss-grand}
  In addition to conditions (\ref{req:space})-(\ref{req:finite}), assume that the closed-loop POMDP Bayesian belief update step is exact.
  Then, for any $\epsilon > 0$, we can choose a $C$ such that the value obtained by POWSS is within $\epsilon$ of the optimal value function at $b_0$ a.s.:
  \begin{align}
    V^*(b_0)-V^{\textsc{POWSS}}(b_0) &\leq \epsilon
  \end{align}
\end{theorem}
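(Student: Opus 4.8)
The plan is to convert the per-node accuracy guarantee of \cref{thm:powss} into a bound on the closed-loop value loss of the policy $\pi^{\textsc{POWSS}}$ defined by running POWSS online. Since POWSS is an online planner, I view the execution as follows: at each time $t=0,\dots,D-1$ the agent holds the \emph{exact} posterior belief $b_t$ (guaranteed by the new assumption), invokes \textsc{SelectAction} to build a fresh sparse tree rooted at a particle approximation $\bar b_t$ of $b_t$, and plays $a_t = \argmax_a \hat Q^*_t(\bar b_t,a)$. Given $\epsilon$, I fix $\lambda$ and $\delta$ as prescribed; a valid $C$ exists because for large $C$ the exponential term $\exp(-C t_{\max}^2)$, with $t_{\max}\to \lambda/(3V_{\max}d_{\inft}^{\max})>0$, dominates the polynomial branching factor $(3|A|C)^D$, so the constraint on $\delta$ is eventually met. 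Crucially, \cref{thm:powss} then applies verbatim to each of these independent planning calls, so for every $t$ the event that all $Q$-estimates in that call lie within $\lambda/(1-\gamma)$ of the true $Q^*_t(b_t,\cdot)$ holds with probability at least $1-\delta$.

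First I would record the standard consequence of accurate $Q$-estimates on the greedy action. On the ``good'' event at step $t$, writing $a_t^* = \pi^*_t(b_t)$ and $a_t = \argmax_a \hat Q^*_t(\bar b_t,a)$, the sandwich
\begin{align*}
Q^*_t(b_t,a_t^*) - Q^*_t(b_t,a_t) &\leq [Q^*_t(b_t,a_t^*) - \hat Q^*_t(\bar b_t,a_t^*)] + [\hat Q^*_t(\bar b_t,a_t) - Q^*_t(b_t,a_t)] \leq \frac{2\lambda}{1-\gamma},
\end{align*}
using $\hat Q^*_t(\bar b_t,a_t)\geq \hat Q^*_t(\bar b_t,a_t^*)$, shows the per-step regret $\ell_t \equiv V^*_t(b_t)-Q^*_t(b_t,a_t)$ is at most $2\lambda/(1-\gamma)$. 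On the complementary ``bad'' event (probability $\leq \delta$) I bound $\ell_t$ trivially by $2V_{\max}$ using $|Q^*_t|,|V^*_t|\leq V_{\max}$ from condition (\ref{req:r}), so that $\E[\ell_t\mid b_t] \leq 2\lambda/(1-\gamma) + 2\delta V_{\max}$ uniformly in $b_t$.

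Next I would telescope the value loss. The performance-difference identity
\begin{align*}
V^*(b_0) - V^{\textsc{POWSS}}(b_0) &= \E_{\pi^{\textsc{POWSS}}}\!\left[\sum_{t=0}^{D-1}\gamma^t\,\ell_t\right]
\end{align*}
follows by expanding $V^*_t(b_t)-V^{\textsc{POWSS}}_t(b_t)=\ell_t+\gamma\,\E_o[V^*_{t+1}(b_tao)-V^{\textsc{POWSS}}_{t+1}(b_tao)]$ and unrolling to depth $D$; this is clean because the random, policy-dependent belief trajectory is absorbed into the outer expectation. Substituting the per-step bound and splitting the geometric sum gives
\begin{align*}
V^*(b_0)-V^{\textsc{POWSS}}(b_0) &\leq \frac{1}{1-\gamma}\cdot\frac{2\lambda}{1-\gamma} + D\cdot 2\delta V_{\max} = \frac{4\lambda}{(1-\gamma)^2},
\end{align*}
where the first term uses $\sum_t\gamma^t\leq 1/(1-\gamma)$ and the second uses $\sum_t\gamma^t\leq D$ together with $\delta = \lambda/(V_{\max}D(1-\gamma)^2)$. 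Finally $\lambda = \epsilon(1-\gamma)^2/5$ yields $4\lambda/(1-\gamma)^2 = 4\epsilon/5 < \epsilon$, as claimed; the ``a.s.'' qualifier is inherited from condition (\ref{req:Renyi}), which holds along almost every sampled observation sequence and is what lets \cref{thm:powss} fire at each node.

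The main obstacle is handling the accumulation of the low-probability failure events across the $D$ online re-planning calls. A naive all-or-nothing union bound would be wasteful; the clean estimate instead comes from the per-step decomposition, in which each step contributes an independent failure mass $\delta$ capped by $2V_{\max}$, and the deliberate scaling $\delta\propto 1/D$ is exactly what keeps the total failure contribution of order $\lambda/(1-\gamma)^2$ rather than growing with the horizon. A secondary point requiring care is the exact-belief-update assumption: it guarantees that the belief $b_t$ actually reached in closed loop is the true Bayesian posterior, so that $Q^*_t(b_t,\cdot)$ is the correct target against which \cref{thm:powss} measures its estimate; without it, an additional belief-approximation error would have to be propagated through the same recursion.
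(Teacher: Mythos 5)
Your proof is correct, but it takes a genuinely different route from the paper's. The paper first proves a separate lemma (an adaptation of Kearns \textit{et al.} and Singh--Yee to belief states) showing that if \emph{all} value estimates $\tilde{V}_{d,t}$ are within $\beta$ of $V^*_{t+d}$ at every online step $t$ and every tree depth $d$, then the greedy-policy loss satisfies $L_{\tilde{V},0}(b) \leq \frac{3\beta}{1-\gamma}$; it then conditions on the global ``good'' event that all $D$ replanned trees are simultaneously accurate (an all-or-nothing union bound costing a factor $D$ on $\delta$), pays $2V_{\max}$ on the complementary event of mass $D\delta$, and lands exactly on $\frac{5\lambda}{(1-\gamma)^2} = \epsilon$. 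You instead bypass that lemma entirely: you define the per-step regret $\ell_t = V^*_t(b_t) - Q^*_t(b_t,a_t)$, bound it by $\frac{2\lambda}{1-\gamma}$ on each call's good event via the standard greedy-sandwich argument (which needs only the \emph{root-level} $d=0$ accuracy of each replanned tree, not accuracy at all depths), cap it by $2V_{\max}$ with mass $\delta$ per call, and telescope with the performance-difference identity. This buys you two things: a structurally simpler argument that uses strictly less of \cref{thm:powss}'s conclusion, and a slightly sharper constant ($\frac{4\lambda}{(1-\gamma)^2} = \frac{4\epsilon}{5} < \epsilon$ versus the paper's exactly $\epsilon$), because distributing the failure mass per step and keeping the discount factors is marginally tighter than the paper's global union bound plus its deliberate over-approximations in the loss recursion. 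Both arguments share the same essential ingredients ($\delta \propto 1/D$ scaling, boundedness on failure, exactness of the outer Bayesian update so that $Q^*_t(b_t,\cdot)$ is the right comparison target), and both make the same implicit extension of \cref{thm:powss} to trees rooted at the reached posteriors $b_t$ rather than only $b_0$ (i.e., condition (\ref{req:Renyi}) is taken to hold for the re-rooted target and sampling distributions), so that is not a gap on your part. One cosmetic note: your phrase ``independent failure mass'' is stronger than what you use or need --- the conditional bound $\E[\ell_t \mid b_t] \leq \frac{2\lambda}{1-\gamma} + 2\delta V_{\max}$, holding uniformly given the history, is exactly what the telescoped expectation requires, with no independence across steps.
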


\Cref{thm:powss,thm:powss-grand} are proven sequentially in the following subsections.
We generally follow the proof strategy of Kearns \textit{et al.}~\shortcite{Kearns2002} but with significant additions to account for the belief-based POMDP calculations rather than state-based MDP calculations. 
We use induction to prove a concentration inequality for the value function at all nodes in the tree, starting at the leaves and proceeding up to the root.

\subsubsection{Value Convergence at Leaf Nodes} \label{sec:leafnode}
First, we reason about the convergence at nodes at depth $D-1$ (leaf nodes). 
In the subsequent analysis, we abbreviate some terms of interest with the following notation:
\begin{align}
  \mathcal{T}_{1:d}^i &\equiv \prod_{n=1}^d \mathcal{T}(s_{n,i}|s_{n-1,i},a_n) \\
  \mathcal{Z}_{1:d}^{i,j} &\equiv \prod_{n=1}^d \mathcal{Z}(o_{n,j}|a_{n},s_{n,i})\notag
\end{align}
Here $d$ denotes the depth, $i$ denotes the index of the state sample, and $j$ denotes the index of the observation sample.
Absence of indices $i,j$ means that $\{s_n\}$ and/or $\{o_n\}$ appear as regular variables.
Intuitively, $\mathcal{T}_{1:d}^i$ is the transition density of state sequence $i$ from the root node to depth $d$, and $\mathcal{Z}_{1:d}^{i,j}$ is the conditional density of observation sequence $j$ given state sequence $i$ from the root node to depth $d$.
Additionally, $b_d^i$ denotes $b_d(s_{d,i})$, $r_{d,i}$ the reward $R(s_{d,i},a_{d})$, and $w_{d,i}$ the weight of $s_{d,i}$.

Since the problem ends after $D$ steps, the $Q$-function for nodes at depth $D-1$ is simply the expectation of final reward and the POWSS estimate has the following form:
\begin{align}
  Q^*_{D-1}(b_{D-1},a) &= \int_S R(s_{D-1},a) b_{D-1} ds_{D-1}\\
  \hat{Q}^*_{D-1}(\bar{b}_{D-1},a) &= \frac{\sum_{i = 1}^C w_{D-1,i}r_{D-1,i}}{\sum_{i = 1}^C w_{D-1,i}}
\end{align}

\begin{lemma}[SN Estimator Leaf Node Convergence]\label{lemma:leaf}
  $\hat{Q}^*_{D-1}(\bar{b}_{D-1},a)$ is an SN estimator of $Q^*_{D-1}(b_{D-1},a)$, and the following leaf-node concentration bound holds with probability at least $1-3\exp(-C\cdot t_{\max}^2(\lambda,C))$,
  \begin{align}
    |Q^*_{D-1}(b_{D-1},a) - \hat{Q}^*_{D-1}(\bar{b}_{D-1},a) | \leq \lambda
  \end{align}
\end{lemma}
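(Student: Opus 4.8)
The plan is to recognize $\hat{Q}^*_{D-1}(\bar{b}_{D-1},a)$ as exactly the self-normalized estimator $\tilde{\mu}_{\mathcal{P}/\mathcal{Q}}$ of \cref{sec:is} for a suitable pair of trajectory distributions, and then invoke \cref{thm:sn} directly. First I would specify the sampling and target measures over the space of state trajectories $s_{0:D-1}$ along the fixed action sequence leading to this node. The sampling measure is the one actually produced by the generative model, $\mathcal{Q} = \mathcal{Q}^{D-1} \propto b_0 \cdot \mathcal{T}_{1:D-1}$, since POWSS draws each successor purely from $\mathcal{T}$ and never rejects on the observation. The target measure is the observation-conditioned posterior, $\mathcal{P} = \mathcal{P}^{D-1} \propto b_0 \cdot \mathcal{T}_{1:D-1} \cdot \mathcal{Z}_{1:D-1}$, whose marginal over $s_{D-1}$ is precisely the belief $b_{D-1}$. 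Their importance weight is therefore $w_{\mathcal{P}/\mathcal{Q}} \propto \mathcal{Z}_{1:D-1}$, which matches the accumulated observation likelihoods $w_{D-1,i} = \frac{1}{C}\mathcal{Z}_{1:D-1}^{i,j}$ carried by the algorithm along the fixed observation path $j$ (the uniform $1/C$ prefactor cancels under self-normalization).

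With $f(s_{0:D-1}) \equiv R(s_{D-1},a)$, I would then verify the two correspondences that establish the SN-estimator claim: $\E_{x\sim\mathcal{P}}[f(x)] = \int_S R(s_{D-1},a)\, b_{D-1}\, ds_{D-1} = Q^*_{D-1}(b_{D-1},a)$, and $\tilde{\mu}_{\mathcal{P}/\mathcal{Q}} = \sum_i \tilde{w}_{\mathcal{P}/\mathcal{Q}}(x_i) f(x_i) = \hat{Q}^*_{D-1}(\bar{b}_{D-1},a)$. This is where the bookkeeping must be done carefully: I must check $\mathcal{P}\ll\mathcal{Q}$, confirm that hypothesis (\ref{req:Renyi}) is exactly the statement $d_\infty(\mathcal{P}^{D-1}\|\mathcal{Q}^{D-1}) \leq d_\infty^{\max}$, and confirm the weight and normalization conventions line up so that the algorithm's weighted average is literally $\tilde{\mu}_{\mathcal{P}/\mathcal{Q}}$.

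Once the identification is in place, the concentration bound follows by applying \cref{thm:sn} with $N = C$, $\|f\|_\infty \leq R_{\max}$ (by (\ref{req:r})), and $d_\infty(\mathcal{P}\|\mathcal{Q}) \leq d_\infty^{\max}$ (by (\ref{req:Renyi})), yielding $|Q^*_{D-1} - \hat{Q}^*_{D-1}| \leq \lambda$ with probability at least $1 - 3\exp(-C\, t^2(\lambda,C))$, where $t(\lambda,C) = \lambda/(\|f\|_\infty d_\infty(\mathcal{P}\|\mathcal{Q})) - 1/\sqrt{C}$. The final step is to replace $t$ by the uniform $t_{\max}$: since $R_{\max} = (1-\gamma)V_{\max} \leq V_{\max} \leq 3V_{\max}$ and $d_\infty \leq d_\infty^{\max}$, the denominator only grows, so $t(\lambda,C) \geq t_{\max}(\lambda,C) > 0$, hence $\exp(-C\,t^2) \leq \exp(-C\,t_{\max}^2)$ and the probability is at least $1-3\exp(-C\,t_{\max}^2)$, as claimed. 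Using the loose constant $3V_{\max}$ in place of the tight $R_{\max}$ costs nothing here but makes $t_{\max}$ a single depth-independent quantity, which the induction in \cref{sec:powss} will need.

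I expect the only genuine obstacle to be the measure-theoretic setup of the first two paragraphs — pinning down $\mathcal{P}$ and $\mathcal{Q}$ on trajectory space, verifying $\mathcal{P}\ll\mathcal{Q}$, and confirming the accumulated weights are truly proportional to $w_{\mathcal{P}/\mathcal{Q}}$ — rather than the concentration step, which is a mechanical application of \cref{thm:sn}. In particular I would be careful to argue explicitly that the $x_i$ feeding \cref{thm:sn} are i.i.d.\ draws from $\mathcal{Q}$: the $C$ trajectories are generated independently along the common action sequence, so this holds, but it is worth stating since the recursive structure of POWSS could otherwise obscure it.
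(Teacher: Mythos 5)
Your proposal is correct and follows essentially the same route as the paper: you identify $\mathcal{P}^{D-1}\propto b_0\,\mathcal{T}_{1:D-1}\,\mathcal{Z}_{1:D-1}$ and $\mathcal{Q}^{D-1}\propto b_0\,\mathcal{T}_{1:D-1}$ on trajectory space, match the accumulated POWSS weights to $w_{\mathcal{P}/\mathcal{Q}}$ up to self-normalization, apply \cref{thm:sn} with the $C$ i.i.d.\ trajectories, and relax to the depth-uniform $t_{\max}$ via $R_{\max}\leq 3V_{\max}$ and $d_\infty\leq d_\infty^{\max}$ — exactly the paper's argument, including the stated motivation for the loose $3V_{\max}$ constant. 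The only cosmetic difference is that the paper plugs $3V_{\max}$ into the theorem directly while you apply it with $R_{\max}$ and then monotonically weaken the exponent, which is equivalent.
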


\begin{proof} First, we show that $\hat{Q}^*_{D-1}(\bar{b}_{D-1},a)$ is an SN estimator of $Q^*_{D-1}(b_{D-1},a)$. By following the recursive belief update, the belief term can be fully expanded:
\begin{align}
  b_{D-1}(s_{D-1}) &= \frac{\int_{S^{D-1}} (\mathcal{Z}_{1:D-1})(\mathcal{T}_{1:D-1}) b_{0} ds_{0:D-2}}{\int_{S^{D}} (\mathcal{Z}_{1:D-1})(\mathcal{T}_{1:D-1}) b_{0} ds_{0:D-1}}
\end{align} 
Then, $Q^*_{D-1}(b_{D-1},a)$ is equal to the following:
\begin{align}
  &Q^*_{D-1}(b_{D-1},a) = \int_S R(s_{D-1},a) b_{D-1} ds_{D-1}\\
  &= \frac{\int_{S^{D}} R(s_{D-1},a) (\mathcal{Z}_{1:D-1})(\mathcal{T}_{1:D-1}) b_{0} ds_{0:D-1}}{\int_{S^{D}} (\mathcal{Z}_{1:D-1})(\mathcal{T}_{1:D-1}) b_{0} ds_{0:D-1}}
\end{align}
We approximate the $Q^*$ function with importance sampling by utilizing problem requirement (\ref{req:generate}), where the target density is $b_{D-1}$. 
First, we sample the sequences $\{s_n\}_i$ according to the joint probability $(\mathcal{T}_{1:D-1}) b_{0}$.
Afterwards, we weight the sequences by the corresponding observation density $\mathcal{Z}_{1:D-1}$, obtained from the generated observation sequences $\{o_n\}_j$.
For now, we assume the observation sequences $\{o_n\}_j$ are fixed.

Applying the importance sampling formalism to our system for all depths $d=0,\cdots,D-1$, $\mathcal{P}^d$ is the normalized measure incorporating the probability of observation sequence $j$ on top of the state sequence $i$ until the node at depth $d$, and $\mathcal{Q}^d$ is the measure of the state sequence. We can think of $\mathcal{P}^d$ being indexed by the observation sequence $\{o_n\}_j$.
\begin{align}\label{eq:pqdef}
  &\mathcal{P}^d = \mathcal{P}_{\{o_n\}_j}^d(\{s_n\}_i) = \frac{(\mathcal{Z}_{1:d}^{i,j}) (\mathcal{T}_{1:d}^i) b_{0}^i}{\int_{S^{d+1}} (\mathcal{Z}_{1:d}^{j}) (\mathcal{T}_{1:d}) b_{0} ds_{0:d}}\\
  &\mathcal{Q}^d = \mathcal{Q}^d(\{s_n\}_i) = (\mathcal{T}_{1:d}^i) b_{0}^i \label{eq:pqdef2}\\
  &w_{\mathcal{P}^d/\mathcal{Q}^d}(\{s_n\}_i) = \frac{(\mathcal{Z}_{1:d}^{i,j}) }{\int_{S^{d+1}} (\mathcal{Z}_{1:d}^{j}) (\mathcal{T}_{1:d}) b_{0} ds_{0:d}} 
\end{align}

The weighing step is done by updating the self-normalized weights given in POWSS algorithm. 
We define $w_{d,i}$ and $r_{d,i}$ as the weights and rewards obtained at step $d$ for state sequence $i$ from POWSS simulation. With our recursive definition of the empirical weights, we obtain the full weight of each state sequence $i$ for a fixed observation sequence $j$:
\begin{align}
  w_{D-1,i} &= w_{D-2,i}\cdot Z(o_{D-1,j}|a_{D-1},s_{D-1,i}) \\
  &\propto \mathcal{Z}_{1:D-1}^{i,j}
\end{align}

Realizing that the marginal observation probability is independent of indexing by $i$, we show that $\hat{Q}^*_{D-1}(\bar{b}_{D-1},a)$ is an SN estimator of $Q^*_{D-1}(b_{D-1},a)$:
\begin{align}
  &\hat{Q}^*_{D-1}(\bar{b}_{D-1},a) = \frac{\sum_{i = 1}^C (\mathcal{Z}_{1:D-1}^{i,j}) R(s_{D-1,i},a)}{\sum_{i = 1}^C (\mathcal{Z}_{1:D-1}^{i,j}) }\\
  &= \frac{\sum_{i = 1}^C \frac{(\mathcal{Z}_{1:D-1}^{i,j})}{\int_{S^{D}} (\mathcal{Z}_{1:D-1}^{j}) (\mathcal{T}_{1:D-1}) b_{0} ds_{0:D-1}} R(s_{D-1,i},a)}{\sum_{i = 1}^C \frac{(\mathcal{Z}_{1:D-1}^{i,j})}{\int_{S^{D}} (\mathcal{Z}_{1:D-1}^{j}) (\mathcal{T}_{1:D-1}) b_{0} ds_{0:D-1}} }\\
  &= \frac{\sum_{i = 1}^C w_{\mathcal{P}^{D-1}/\mathcal{Q}^{D-1}}(\{s_n\}_i) R(s_{D-1,i},a)}{\sum_{i = 1}^C w_{\mathcal{P}^{D-1}/\mathcal{Q}^{D-1}}(\{s_n\}_i) }\\
  &= \sum_{i = 1}^C \tilde{w}_{\mathcal{P}^{D-1}/\mathcal{Q}^{D-1}}(\{s_n\}_i) R(s_{D-1,i},a)
\end{align}
Since $\{s_n\}_1,\cdots \{s_n\}_C$ are i.i.d.r.v. sequences of depth $D$ sampled from $\mathcal{Q}^{D-1}$, and $R$ is a bounded function from problem requirement (\ref{req:r}), we can apply the SN concentration bound in Theorem $\ref{thm:sn}$ to prove Lemma $\ref{lemma:leaf}$. Detailed finishing steps of the proof are given in \cref{app:lemma1}.
\end{proof}

\subsubsection{Induction from Leaf to Root Nodes}
Now, we want to show that nodes at all depths have convergence guarantees via induction.
\begin{lemma}[SN Estimator Step-by-Step Convergence]\label{lemma:step}
  $\hat{Q}^*_d(\bar{b}_d,a)$ is an SN estimator of $Q^*_d(b_d,a)$, and for all $d = 0,\cdots,D-1$ and $a$, the following holds with probability at least $1 - 3|A|(3|A|C)^{D}\exp(-C\cdot t^2_{\max})$:
  \begin{align} 
    &|Q^*_d(b_d,a) - \hat{Q}^*_d(\bar{b}_d,a)| \leq \alpha_d \label{eq:dcondition}\\
    &\alpha_{d} \equiv \lambda + \gamma\alpha_{d+1};\; \alpha_{D-1} = \lambda
  \end{align}
\end{lemma}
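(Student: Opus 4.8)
The plan is to induct on the depth, working from the leaves $d=D-1$ up to the root $d=0$, using \cref{lemma:leaf} as the base case (where $\alpha_{D-1}=\lambda$) and \cref{thm:sn} to control the sampling error introduced at each node. The guiding idea is that the error of $\hat{Q}^*_d$ splits into two contributions: the self-normalized importance-sampling error committed at the current node, which \cref{thm:sn} bounds by $\lambda$, and the error propagated upward from the recursive value estimates at the child nodes, which the inductive hypothesis bounds by $\gamma\alpha_{d+1}$. Summing the two gives precisely the recursion $\alpha_d = \lambda + \gamma\alpha_{d+1}$.

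For the inductive step, I would fix a depth-$d$ belief node $\bar{b}_d$ and action $a$, assume the $\alpha_{d+1}$ bound holds at every depth-$(d+1)$ node and action, and introduce the intermediate estimator
\begin{align*}
  \tilde{Q}^*_d(\bar{b}_d,a) = \frac{\sum_{i=1}^C w_{d,i}\lrp{r_{d,i} + \gamma V^*_{d+1}(b_d a o_{d+1,i})}}{\sum_{i=1}^C w_{d,i}},
\end{align*}
which replaces the recursive POWSS value $\hat{V}^*_{d+1}(\overline{bao_i})$ by the exact value $V^*_{d+1}$ at the true updated belief. Then $|Q^*_d - \hat{Q}^*_d| \leq |Q^*_d - \tilde{Q}^*_d| + |\tilde{Q}^*_d - \hat{Q}^*_d|$. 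For the first term I would mirror the argument of \cref{lemma:leaf} with the bounded Borel function $f = R(\cdot,a) + \gamma V^*_{d+1}(b_d a o)$ in place of $R(\cdot,a)$: the trajectories are drawn from $\mathcal{Q}^d$ and reweighted by $\mathcal{Z}_{1:d}$ toward $\mathcal{P}^d$, while the freshly sampled $o_{d+1,i}$ follow the correct marginal observation density, so the weighted average estimates $\E[V^*_{d+1}(b_d a o)]$ and $\tilde{Q}^*_d$ is a genuine SN estimator of $Q^*_d$. Since $\norm{f}_\infty \leq R_{\max}+\gamma V_{\max} = V_{\max} \leq 3V_{\max}$, condition (\ref{req:Renyi}) keeps the R\'enyi divergence at most $d_{\infty}^{\max}$, and \cref{thm:sn} with $N=C$ gives $|Q^*_d - \tilde{Q}^*_d| \leq \lambda$ with probability at least $1-3\exp(-C t_{\max}^2)$, using that the conservative $3V_{\max}$ makes $t_{\max}$ no larger than the true $t(\lambda,C)$.

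For the second term, all summands share the self-normalized weights that sum to one and differ only in the value factor, and the $\max$ in \textsc{EstimateV} is non-expansive, so
\begin{align*}
  |\tilde{Q}^*_d - \hat{Q}^*_d| \leq \gamma\max_i \max_{a'}\left|Q^*_{d+1} - \hat{Q}^*_{d+1}\right| \leq \gamma\alpha_{d+1},
\end{align*}
the last step being the inductive hypothesis at the relevant children. Combining yields $|Q^*_d - \hat{Q}^*_d| \leq \lambda + \gamma\alpha_{d+1} = \alpha_d$. To produce a single probability valid simultaneously at every node and action, I would union-bound the per-node events of \cref{thm:sn} over the whole tree: at depth $d$ there are at most $(|A|C)^d$ belief nodes and $|A|$ actions each, every event failing with probability at most $3\exp(-C t_{\max}^2)$; bounding the resulting geometric sum crudely yields the stated prefactor $3|A|(3|A|C)^{D}$. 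Because condition (\ref{req:Renyi}) holds a.s.\ and uniformly over observation sequences, the conditional bound transfers to the randomly realized sequences.

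I expect the main obstacle to be the probabilistic bookkeeping that welds all nodes into one high-probability event, together with the verification that $\tilde{Q}^*_d$ really is an SN estimator of $Q^*_d$ — i.e., that reweighting the sampled trajectories simultaneously reproduces the current reward expectation and the marginal-observation expectation of the next-step value, so that a single function $f$ and a single $t_{\max}$ govern every node uniformly. Confirming that the sampling events at distinct nodes combine cleanly with the inductive-hypothesis events at their children, despite the statistical dependence between a node's estimate and its descendants, is the delicate part; the non-expansiveness step and the uniform bound $\norm{f}_\infty \leq V_{\max}$ are comparatively routine.
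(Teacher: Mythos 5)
Your proposal is correct and shares the paper's global architecture — leaf-to-root induction with \cref{lemma:leaf} as the base case, the crude $3V_{\max}$ bound to get a single uniform $t_{\max}$, the non-expansive $\max$ plus the inductive hypothesis to absorb the children's error into $\gamma\alpha_{d+1}$, and a tree-wide union bound — but your treatment of the per-node sampling error is genuinely different. The paper splits $|Q^*_d-\hat{Q}^*_d|$ into a reward error (A) and a next-step error (B), then splits (B) again into an importance-sampling error on the integrated next-step value $\mathbf{V}^*_{d+1}(s_{d,i},b_d,a)$, a zero-mean one-sample Monte Carlo error $\Delta_{d+1}=\mathbf{V}^*_{d+1}-V^*_{d+1}(b_dao_i)$, and the function-estimation error, invoking \cref{thm:sn} three times per node-action with the budget $\frac{R_{\max}}{3V_{\max}}\lambda+\frac{\gamma}{3}\lambda+\frac{2}{3}\lambda=\lambda$; this triple use is exactly the origin of the inner $3$ in the $(3|A|C)^{D}$ prefactor. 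You instead merge those three contributions into a single SN application with $f = R(s_d,a)+\gamma V^*_{d+1}(b_dao)$ on the sample space extended by the freshly drawn $(s_{d+1,i},o_i)$: since the importance weights depend only on $s_{0:d}$, the extended pair has the same $d_{\inft}\leq d_{\inft}^{\max}$, $\norm{f}_\inft\leq R_{\max}+\gamma V_{\max}=V_{\max}$, and $\E_{\mathcal{P}}[f]=Q^*_d(b_d,a)$ by the Bellman equation, so one invocation of \cref{thm:sn} yields the full $\lambda$. What each buys: your version needs only one concentration event per node-action instead of three, so the stated prefactor $3|A|(3|A|C)^{D}$ is comfortably loose for you (roughly $(|A|C)^{D}$ events suffice), and it avoids the $\lambda/3+2\lambda/(3\gamma)$ budgeting; the paper's three-way split isolates each error source explicitly and keeps two of its three applications of \cref{thm:sn} on the unextended space of state sequences — though its own $\Delta_{d+1}$ step already implicitly uses the extended space, so your construction is no less rigorous, provided you state explicitly (as you flag) that the extended tuples are i.i.d., that the weight ratio and hence $d_{\inft}$ are unchanged, and that the weighted mean reproduces $Q^*_d$. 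Your handling of the correlation between the realized observation sequence and the particle that generated it ("the conditional bound transfers to the randomly realized sequences") sits at the same level of informality as the paper's own remark in the proof of \cref{lemma:leaf}, so no gap there relative to the source.
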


\begin{proof} First, we set $C$ such that $C > (3V_{\max} d_{\inft}^{\max}/\lambda)^2$ to satisfy $t_{\max}(\lambda,C)>0$, which ensures that the SN concentration inequality holds with probability $1-3\exp(-C\cdot t_{\max}^2(\lambda,C))$ at any given step $d$ and action $a$.
Furthermore, we multiply the worst-case union bound factor $(3|A|C)^{D}$, since we want the function estimates to be within their respective concentration bounds for all the actions $|A|$ and child nodes $C$ at each step $d=0,\cdots,D-1$, for the 3 times we use SN concentration bound in the induction step. We once again multiply the final $\delta$ by $|A|$ to account for the root node $Q$-value estimates also satisfying their respective concentration bounds for all actions. 

Following our definition of \textsc{EstimateQ}, the value function estimates at step $d$ are given as the following:
\begin{align}
\hat{V}^*_d(\bar{b}_d) &= \max_{a\in A}\hat{Q}^*_d(\bar{b}_d,a)\\
  \hat{Q}^*_d(\bar{b}_d,a) &= \frac{\sum_{i = 1}^C w_{d,i}\lrp{r_{d,i} +  \gamma\hat{V}^*_{d+1}(\overline{b_dao_i})}}{\sum_{i = 1}^C w_{d,i}}
\end{align}
The base case $d=D-1$ holds by Lemma \ref{lemma:leaf}.
Then for the inductive step, we assume \cref{eq:dcondition} holds for all actions at step $d+1$. 
Using the triangle inequality for step $d$, we split the difference into two terms, the reward estimation error (A) and the next-step value estimation error (B):
\begin{align}
  &|Q^*_d(b_d,a) - \hat{Q}^*_d(\bar{b}_d,a) | \\
  \begin{split}
    &\leq \undb{\left| \E[R(s_d,a)|b_d] - \frac{\sum_{i = 1}^C w_{d,i}r_{d,i}}{\sum_{i = 1}^C w_{d,i}}\right|}{(A)} \\
    &+ \gamma \undb{\left| \E[V_{d+1}^*(bao)|b_d] - \frac{\sum_{i = 1}^C w_{d,i}\hat{V}^*_{d+1}(\overline{b_dao_i})}{\sum_{i = 1}^C w_{d,i}} \right|}{(B)}
  \end{split}\notag
\end{align}
Each of the error terms are bound by $(A) \leq \frac{R_{\max}}{3V_{\max}}\lambda$ and $(B)\leq \frac{1}{3}\lambda + \frac{2}{3\gamma}\lambda + \alpha_{d+1}$. 
We provide a detailed justification of these bounds in \cref{app:lemma2}, which uses the SN concentration bound 3 times. 
Combining (A) and (B), we prove the inductive hypothesis:
\begin{align}
   |Q^*_d(b_d,a) - \hat{Q}^*_d(\bar{b}_d,a) | &\leq \frac{R_{\max}}{3V_{\max}}\lambda + \gamma [\frac{1}{3}\lambda + \frac{2}{3\gamma}\lambda + \alpha_{d+1}]\notag\\
   &\leq  \lambda + \gamma \alpha_{d+1} = \alpha_d \label{eq:stepbystep}
\end{align}
Therefore, \cref{eq:dcondition} holds for all $d = 0,\cdots,D-1$ with probability at least $1 - 3|A|(3|A|C)^{D}\exp(-C\cdot t^2_{\max})$. \end{proof}

\begin{proof} (Theorem \ref{thm:powss}) First, we choose constants $C,\lambda,\delta$ and densities $\mathcal{Z},\mathcal{T},b_0$ that satisfy the conditions in \cref{thm:powss}. Since $\alpha_d \leq \alpha_0$, the following holds for all $d=0,\cdots, D-1$ with probability at least $1-\delta$ through \cref{lemma:leaf,lemma:step}:
\begin{align}
  |Q^*_d(b_d,a) - \hat{Q}^*_d(\bar{b}_d,a) | & \leq \alpha_0  =  \sum_{d=0}^{D-1} \gamma^{d} \lambda \leq \frac{\lambda}{1-\gamma}
\end{align}
Note that the convergence rate $\delta$ is $\mathcal{O}(C^{D}\exp(-t C))$, where $t = (\lambda/(3V_{\max}d_{\inft}^{\max}))^2$.
\end{proof}

\subsubsection{Near-Optimal Policy Performance}
We have proven in the previous subsection that the planning step results in a near-optimal $Q$-value for a given belief.
Assuming further that we have a perfect Bayesian belief update in the outer observe-plan-act loop, we prove \cref{thm:powss-grand}, which states that the closed-loop POMDP policy generated by POWSS at each planning step results in a near-optimal policy. 
The proof given in \cref{app:theorem3} combines \cref{thm:powss} with results from Kearns \textit{et al.}~\shortcite{Kearns2002}; Singh and Yee~\shortcite{Singh1994}:
\begin{align}\label{eq:powss-grand}
  V^*(b_0)-V^{\textsc{POWSS}}(b_0) &\leq \epsilon
\end{align}

\section{Experiments} \label{sec:experiments}

The simple numerical experiments in this section confirm the theoretical results of \cref{sec:maintheory}. Specifically, they show that the value function estimates of POSS converge to the QMDP approximation and the value function estimates of POWSS converge to the optimal value function for a toy problem.

\subsection{Continuous Observation Tiger Problem}
We consider a simple modification of the classic tiger problem \cite{kaelbling1998planning} that we refer to as the continuous observation tiger (CO-tiger) problem. 
In the CO-tiger problem, the agent is presented with two doors, left (L) and right (R).
One door has a tiger behind it ($S = \{\texttt{Tiger L}, \texttt{Tiger R}\}$).
In the classic problem, the agent can either open one of the doors or listen, and the CO-tiger problem has an additional wait action to illustrate the suboptimality of QMDP estimates ($A = \{\texttt{Open L}, \texttt{Open R}, \texttt{Wait}, \texttt{Listen}\}$). 
If the agent opens a door, the problem terminates immediately; If the tiger is behind that door, a penalty of -10 is received, but if not, a reward of 10 is given.
Waiting has a penalty of -1 and listening has a penalty of -2.
If the agent waits or listens, a noisy continuous observation between 0 and 1 is received ($O = [0,1]$).
In the wait case, this observation is uniformly distributed, independent of the tiger's position, yielding no information.
In the listen case, the observation distribution is piecewise uniform.
An observation in $[0, 0.5]$ corresponds to a tiger behind the left door and $(0.5, 1]$ the right door.
Listening yields an observation in the correct range 85\% of the time.
The discount is $0.95$, and the terminal depth is $3$.

The optimal solution to this problem may be found by simply discretizing the observation space so that any continuous observation in $[0, 0.5]$ is treated as a \texttt{Tiger L} observation, and any continuous observation in $(0.5, 1]$ is treated as a \texttt{Tiger R} observation.
This fully discrete version of the problem may be easily solved by a classical solution method such as the incremental pruning method of Cassandra \textit{et al.}~\shortcite{cassandra1997incremental}.
Given an evenly-distributed initial belief, the optimal action is \texttt{Listen} with a value of 4.65, and the \texttt{Wait} action has a value of 3.42. 
The QMDP estimate for \texttt{Wait} is 8.5 and for \texttt{Listen} is 7.5.

While the CO-tiger problem is too small to be of practical significance, it serves as an empirical demonstration that POWSS converges toward the optimal value estimates and that POSS converges toward the QMDP estimates. 
In fact, the QMDP estimates generated by POSS are suboptimal in this example and lead to picking the suboptimal \texttt{Wait} action.
Both POWSS and POSS were implemented using the POMDPs.jl framework,
\cite{egorov2017pomdps}
and open-source code can be found at 
\url{https://github.com/JuliaPOMDP/SparseSampling.jl}.

\subsection{Results}
The results plotted in \cref{fig:convergence} show the $Q$-value estimates of POWSS converging toward the optimal $Q$-values as the width $C$ is increased.
Each data point represents the mean $Q$-value from 200 runs of the algorithm from a uniformly-distributed belief, with the standard deviation plotted as a ribbon.
The estimates for POSS have no uncertainty bounds since the estimates in this problem are the same for all $C$.

With $C=1$, POWSS suffers from particle depletion and, because of the particular structure of this problem, finds the QMDP $Q$-values.
As $C$ increases, one can observe that both bias and variance in the $Q$-value estimates significantly decrease in agreement with our theoretical results, while POSS continues to yield incorrect estimates.

Some estimates by POMCPOW are also included.
These are not directly comparable since POMCPOW is parameterized differently.
For these tests, the double progressive widening parameters $k_o=C$, $\alpha_o=0$ were used to limit the tree width, with $n=C^3$ iterations to keep the particle density high in wider trees (see Sunberg and Kochenderfer~\shortcite{Sunberg2017} for parameter definitions).
POMCPOW's value estimates are strongly biased downwards by exploration actions, but the estimated value for \texttt{Listen} action is much higher than the estimated value for the \texttt{Wait} action, which is too low to appear on the plot.
Thus the correct action will usually still be chosen.
At $C=41$, POMCPOW is about an order of magnitude faster than POWSS.

\begin{figure}[t]
    \centering
    \includegraphics[width=\columnwidth]{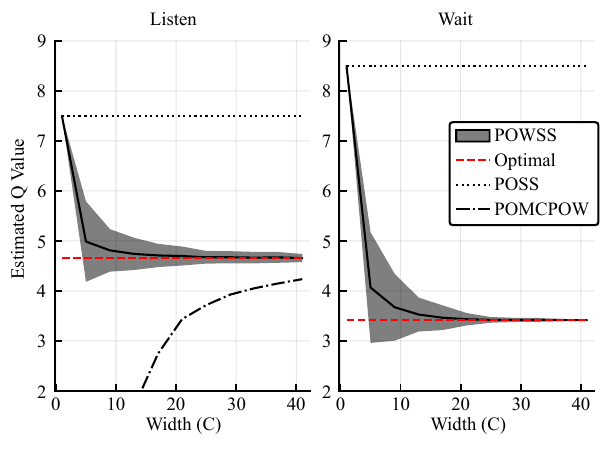}
    \caption{Numerical convergence of $Q$-value estimates for POSS, POWSS, and POMCPOW in the CO-tiger problem. Ribbons indicate standard deviation.}
    \label{fig:convergence}
\end{figure}

\section{Conclusion}

This work has proposed two new POMDP algorithms and analyzed their convergence in POMDPs with continuous observation spaces.
Though these algorithms are not computationally efficient and thus not suitable for realistic problems, this work lays the foundation for analysis of more complex algorithms, rigorously justifying the observation likelihood weighting used in POWSS, POMCPOW, and DESPOT-$\alpha$.

There is a great deal of future work to be done along this path.
Most importantly, the theory presented in this work should be extended to more computationally efficient and hence practical algorithms.
Before extending to POMCPOW and DESPOT-$\alpha$, it may be beneficial to apply these techniques to an algorithm that is less conceptually complex, such as a modification of Sparse-UCT \cite{bjarnason2009lower} extended to partially observable domains.
Such an algorithm could enjoy strong theoretic guarantees, ease of implementation, and good performance on large problems.

Moreover, the proof techniques in this work may yield insight into which problems are difficult for sparse tree search techniques.
For example, the R\'enyi divergence between the marginal and conditional state distributions (assumption (\ref{req:Renyi})) may be a difficulty indicator for likelihood-weighted sparse tree solvers, similar to the covering number of the optimal reachable belief space for point-based solvers \cite{lee2008makes}.

\section*{Acknowledgements}
This material is based upon work supported by a DARPA Assured Autonomy Grant, the SRC CONIX program, NSF CPS Frontiers, the ONR Embedded Humans MURI, and the National Science Foundation Graduate Research Fellowship Program under Grant No. DGE 1752814. 
Any opinions, findings, and conclusions or recommendations expressed in this material are those of the authors and do not necessarily reflect the views of any aforementioned organizations.
The authors also thank Lasse Peters for valuable manuscript feedback.

\bibliographystyle{named}
\bibliography{POWSS-IJCAI}
\newpage

\onecolumn
\appendix
\setcounter{theorem}{0}
\setcounter{lemma}{0}
\setcounter{equation}{0}
\addcontentsline{toc}{section}{Appendix} 
\part{Appendix} 
\parttoc 
\setcounter{page}{1}

\section{Proof of Theorem 1}\label{app:theorem1}
\begin{theorem}[SN $d_{\inft}$-Concentration Bound]
  Let $\mathcal{P}$ and $\mathcal{Q}$ be two probability measures on the measurable space $(\mathcal{X},\mathcal{F})$ with $\mathcal{P} \ll \mathcal{Q}$ and $d_{\inft}(\mathcal{P}||\mathcal{Q}) < +\inft$. Let $x_1,\cdots, x_N$ be i.i.d.r.v. sampled from $\mathcal{Q}$, and $f: \mathcal{X} \to \R$ be a bounded Borel function ($\norm{f}_\inft < + \inft$). Then, for any $\lambda > 0$ and $N$ large enough such that $\lambda > \norm{f}_{\inft}  d_{\inft}(\mathcal{P}||\mathcal{Q})/\sqrt{N}$, the following bound holds with probability at least $1-3\exp(-N\cdot t^2(\lambda,N))$:
  \begin{align} 
    |\E_{x\sim \mathcal{P}}[f(x)] - \tilde{\mu}_{\mathcal{P}/\mathcal{Q}}| &\leq \lambda
  \end{align} 
  where $t(\lambda,N)$ is defined as:
  \begin{align}
    t(\lambda,N) &\equiv \frac{\lambda}{\norm{f}_{\inft}d_{\inft}(\mathcal{P}||\mathcal{Q})}-\frac{1}{\sqrt{N}}
  \end{align}
\end{theorem}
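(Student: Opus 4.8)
The plan is to recognize the SN estimator as a ratio of two sample averages and to exploit the almost-sure boundedness of the importance weights that the $d_\inft$ assumption supplies. Writing $w_i \equiv w_{\mathcal{P}/\mathcal{Q}}(x_i)$, the self-normalization cancels any constant of proportionality, so I may take $w_i$ to be the exact Radon--Nikodym derivative $(d\mathcal{P}/d\mathcal{Q})(x_i)$. Then $\tilde{\mu}_{\mathcal{P}/\mathcal{Q}} = \hat{A}_N/\hat{B}_N$ with $\hat{A}_N = \frac{1}{N}\sum_i w_i f(x_i)$ and $\hat{B}_N = \frac{1}{N}\sum_i w_i$, and the two defining identities $\E_{x\sim\mathcal{Q}}[w]=1$ and $\E_{x\sim\mathcal{Q}}[w f]=\E_{x\sim\mathcal{P}}[f]\equiv\mu$ exhibit $\hat{A}_N$ as an unbiased estimator of $\mu$ and $\hat{B}_N$ of $1$. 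The useful algebraic form is the centered ratio
\[
  \tilde{\mu}_{\mathcal{P}/\mathcal{Q}} - \mu = \frac{\frac{1}{N}\sum_{i=1}^N w_i\lrp{f(x_i)-\mu}}{\hat{B}_N},
\]
which reduces the claim to showing the centered numerator average $\bar{Z}\equiv\frac{1}{N}\sum_i w_i(f(x_i)-\mu)$ is near $0$ while the denominator $\hat{B}_N$ stays bounded away from $0$.

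This is exactly where assumption $d_\inft(\mathcal{P}||\mathcal{Q})<+\inft$ enters: since $w_i \le d_\inft(\mathcal{P}||\mathcal{Q})$ a.s., each summand $w_i$ lies in $[0,d_\inft]$ and each summand $w_i(f(x_i)-\mu)$ lies in a symmetric interval of half-width at most $2\norm{f}_\inft d_\inft$, so both $\bar{Z}$ and $\hat{B}_N$ are averages of bounded i.i.d.\ random variables and Hoeffding's inequality applies. I would apply the \emph{two-sided} Hoeffding bound to the mean-zero average $\bar{Z}$, and only a \emph{one-sided} Hoeffding bound to $\hat{B}_N$, since in $|\tilde{\mu}_{\mathcal{P}/\mathcal{Q}}-\mu| = |\bar{Z}|/\hat{B}_N$ a larger denominator only shrinks the ratio, so no upper bound on $\hat{B}_N$ is needed. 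A union bound over these three tail events (two for the numerator, one for the denominator) is what produces the factor $3$ in the probability $1-3\exp(-N t^2)$.

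To finish, I would restrict to the ``good'' event on which $|\bar{Z}|\le s_1$ and $\hat{B}_N\ge 1-s_2$, splitting the error budget $\lambda$ between the numerator threshold $s_1$ and the denominator threshold $s_2$. On this event the centered ratio gives $|\tilde{\mu}_{\mathcal{P}/\mathcal{Q}}-\mu|\le s_1/(1-s_2)\le\lambda$ once the denominator is bounded below by $\hat{B}_N \ge 1 - \mathcal{O}(1/\sqrt{N})$; this denominator correction is precisely the source of the $-1/\sqrt{N}$ term in $t(\lambda,N)$, and the hypothesis $\lambda>\norm{f}_\inft d_\inft(\mathcal{P}||\mathcal{Q})/\sqrt{N}$, equivalently $t(\lambda,N)>0$, is what guarantees $1-s_2>0$ so that the ratio is well defined and the bound is nonvacuous.

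The main obstacle I anticipate is the bookkeeping that consolidates the three separate Hoeffding exponents into the single stated exponent $N\,t^2(\lambda,N)$. Each tail is naturally controlled by its own threshold and range ($\norm{f}_\inft d_\inft$ scaling the numerator, $d_\inft$ scaling the denominator), so the thresholds $s_1,s_2$ must be allocated so that all three exponents collapse to the common form $\big(\lambda/(\norm{f}_\inft d_\inft)-1/\sqrt{N}\big)^2$, and one must verify that the induced lower bound on $\hat{B}_N$ is compatible with the final accuracy $\lambda$. Tracking the constants through this allocation is the delicate technical step; it mirrors the $d_2$-based derivation of Metelli \textit{et al.}\ Proposition~D.3, with the Chebyshev/polynomial tails there replaced by the exponential Hoeffding tails that the stronger $d_\inft$ boundedness makes available.
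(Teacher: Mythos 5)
Your decomposition is genuinely different from the paper's, and unfortunately the consolidation step you yourself flag as delicate is exactly where it breaks, in two concrete ways. First, the constants cannot be made to collapse to the stated $t(\lambda,N)$. Your numerator summands $w_i\left(f(x_i)-\mu\right)$ have range up to $4\|f\|_\infty d_\infty(\mathcal{P}||\mathcal{Q})$, so the two-sided Hoeffding bound at threshold $s_1$ has exponent $N s_1^2/\bigl(8\|f\|_\infty^2 d_\infty^2(\mathcal{P}||\mathcal{Q})\bigr)$. Forcing this to be at least $N\,t^2(\lambda,N)$ with the theorem's $t(\lambda,N)=\lambda/(\|f\|_\infty d_\infty(\mathcal{P}||\mathcal{Q}))-1/\sqrt{N}$ requires $s_1 \geq 2\sqrt{2}\,\bigl(\lambda - \|f\|_\infty d_\infty(\mathcal{P}||\mathcal{Q})/\sqrt{N}\bigr)$, which for large $N$ exceeds $\lambda$ and is therefore incompatible with your accuracy requirement $s_1/(1-s_2)\leq\lambda$, which forces $s_1<\lambda$. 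So the ratio-plus-union-bound route proves a bound of the same qualitative shape but only with a constant-factor smaller $t$; it cannot reproduce the exact exponent claimed in the theorem. Second, your proposed mechanism for the $-1/\sqrt{N}$ term fails outright: if you pin the denominator at $\hat{B}_N \geq 1-s_2$ with $s_2 = \mathcal{O}(1/\sqrt{N})$, Hoeffding's lower-tail bound gives failure probability $\exp\bigl(-2Ns_2^2/d_\infty^2(\mathcal{P}||\mathcal{Q})\bigr) = \exp(-\mathcal{O}(1))$, a constant in $N$ rather than anything of the form $\exp(-N t^2)$. You cannot simultaneously have a denominator correction at scale $1/\sqrt{N}$ and an exponentially small failure probability for that event.

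The paper's proof avoids both problems by never lower-bounding the normalizer at all. It works with the deviation of $\tilde{\mu}_{\mathcal{P}/\mathcal{Q}}$ from its own expectation $\E_{x\sim\mathcal{Q}}[\tilde{\mu}_{\mathcal{P}/\mathcal{Q}}]$ and handles the mismatch with $\E_{x\sim\mathcal{P}}[f(x)]$ as a \emph{bias} term, bounded deterministically via Cauchy--Schwarz by $\|f\|_\infty\sqrt{(d_2(\mathcal{P}||\mathcal{Q})-1)/N} \leq \|f\|_\infty d_\infty(\mathcal{P}||\mathcal{Q})/\sqrt{N}$; this bias is absorbed by recasting $\lambda$ to $\tilde{\lambda} = \lambda - |\mathrm{bias}|$, and that subtraction is the true source of the $-1/\sqrt{N}$ term in $t(\lambda,N)$ (the hypothesis $\lambda > \|f\|_\infty d_\infty(\mathcal{P}||\mathcal{Q})/\sqrt{N}$ guarantees $\tilde{\lambda}>0$, not positivity of a normalizer). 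The factor $3$ likewise arises differently there, as $\tilde{\delta}+2\tilde{\delta}$ from the two recast tails, not from a three-event union bound. If you want to salvage your route, fix $s_2$ at a constant (e.g. demand $\hat{B}_N\geq 1/2$ and $|\bar{Z}|\leq\lambda/2$); this yields a correct and fully rigorous exponential bound, but with an exponent of the form $cN\min\bigl(\lambda^2/(\|f\|_\infty^2 d_\infty^2(\mathcal{P}||\mathcal{Q})),\, 1/d_\infty^2(\mathcal{P}||\mathcal{Q})\bigr)$, which is strictly weaker than, and does not establish, the theorem as stated.
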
 

\begin{proof} This proof follows similar proof steps as in Metelli \textit{et al.}~\shortcite{Metelli2018}. Since we have upper bounds on the infinite R\'enyi divergence $d_{\inft}(\mathcal{P}||\mathcal{Q})$, we can start from the Hoeffding's inequality for bounded random variables applied to the regular IS estimator $\hat{\mu}_{\mathcal{P}/\mathcal{Q}} = \frac{1}{N}\sum_{i=1}^N w_{\mathcal{P}/\mathcal{Q}}(x_i)f(x_i)$, which is unbiased. While applying the Hoeffding's inequality, we can view importance sampling on $f(x)$ weighted by $w_{\mathcal{P}/\mathcal{Q}}(x)$ as Monte Carlo sampling on $g(x) = w_{\mathcal{P}/\mathcal{Q}}(x)f(x)$, which is a function bounded by $\norm{g}_\inft = d_{\inft}(\mathcal{P}||\mathcal{Q})\norm{f}_{\inft}$:
\begin{align}
  \P\lrp{\hat{\mu}_{\mathcal{P}/\mathcal{Q}} - \E_{x \sim P}[f(x)] \geq \lambda} &= \P\lrp{\hat{\mu}_{\mathcal{P}/\mathcal{Q}} - \E_{x \sim Q}[\hat{\mu}_{\mathcal{P}/\mathcal{Q}}(x)f(x)] \geq \lambda}\\
  &\leq \exp\lrp{-\frac{2N^2\lambda^2}{\sum_{i=1}^N 2(d_{\inft}(\mathcal{P}||\mathcal{Q})\norm{f}_{\inft})^2}}\\
  &\leq \exp\lrp{-\frac{N\lambda^2}{d_{\inft}^2(\mathcal{P}||\mathcal{Q})\norm{f}_{\inft}^2}} \equiv \delta\\
  \P\lrp{|\hat{\mu}_{\mathcal{P}/\mathcal{Q}} - \E_{x \sim P}[f(x)]| \geq \lambda} & \leq 2\exp\lrp{-\frac{N\lambda^2}{d_{\inft}^2(\mathcal{P}||\mathcal{Q})\norm{f}_{\inft}^2}} = 2\delta
\end{align}
We prove a similar bound for the SN estimator $\tilde{\mu}_{\mathcal{P}/\mathcal{Q}} = \sum_{i=1}^N \tilde{w}_{\mathcal{P}/\mathcal{Q}}(x_i) f(x_i)$, which is a biased estimator. However, we need to take a step further and analyze the absolute difference, requiring us to split the difference up into two terms:
\begin{align}
  \P(|&\E_{x\sim \mathcal{P}}[f(x)] - \tilde{\mu}_{\mathcal{P}/\mathcal{Q}}|\geq \lambda) \\
  &\leq \P(\tilde{\mu}_{\mathcal{P}/\mathcal{Q}} - \E_{x\sim \mathcal{P}}[f(x)] \geq \lambda) + \P(\E_{x\sim \mathcal{P}}[f(x)] - \tilde{\mu}_{\mathcal{P}/\mathcal{Q}}\geq \lambda)\\
  &\leq \P(\tilde{\mu}_{\mathcal{P}/\mathcal{Q}} - \E_{x\sim \mathcal{Q}}[\tilde{\mu}_{\mathcal{P}/\mathcal{Q}}]\geq \tilde{\lambda}) + \P(\E_{x\sim \mathcal{Q}}[\tilde{\mu}_{\mathcal{P}/\mathcal{Q}}] - \tilde{\mu}_{\mathcal{P}/\mathcal{Q}}\geq \tilde{\lambda})\\
  &\leq \tilde{\delta} + \P(\E_{x\sim \mathcal{P}}[f(x)] - \tilde{\mu}_{\mathcal{P}/\mathcal{Q}}\geq \lambda)
\end{align}
The first term is bounded by $\tilde{\delta}$ from the above bound and recasting $\lambda$ to $\tilde{\lambda}$ to account for the bias of the SN estimator:
\begin{align}
  \tilde{\lambda} &= \lambda - \left|\E_{x\sim \mathcal{P}}[f(x)] - \E_{x\sim \mathcal{Q}}[\tilde{\mu}_{\mathcal{P}/\mathcal{Q}}]\right|\\
  \tilde{\delta} &= \exp\lrp{-\frac{N\tilde{\lambda}^2}{d_{\inft}^2(\mathcal{P}||\mathcal{Q})\norm{f}_{\inft}^2}}
\end{align}
Note that the bias term in the SN estimator is bounded by following through Cauchy-Schwarz inequality, closely following steps from Metelli \textit{et al.}~\shortcite{Metelli2018}:
\begin{align}
  |\E_{x\sim \mathcal{P}}&[f(x)] - \E_{x\sim \mathcal{Q}}[\tilde{\mu}_{\mathcal{P}/\mathcal{Q}}]| = \left|\E_{x\sim \mathcal{Q}}[\hat{\mu}_{\mathcal{P}/\mathcal{Q}}-\tilde{\mu}_{\mathcal{P}/\mathcal{Q}}] \right| \leq \E_{x\sim \mathcal{Q}}[|\hat{\mu}_{\mathcal{P}/\mathcal{Q}}-\tilde{\mu}_{\mathcal{P}/\mathcal{Q}}|]\\
  &\leq \E_{x\sim \mathcal{Q}}\left| \frac{\sum_{i=1}^N w_{\mathcal{P}/\mathcal{Q}}(x_i) f(x_i)}{\sum_{i=1}^N w_{\mathcal{P}/\mathcal{Q}}(x_i) }-\frac{1}{N}\sum_{i=1}^N w_{\mathcal{P}/\mathcal{Q}}(x_i) f(x_i) \right|\\
  &= \E_{x\sim \mathcal{Q}}\lrs{\left| \frac{\sum_{i=1}^N w_{\mathcal{P}/\mathcal{Q}}(x_i) f(x_i)}{\sum_{i=1}^N w_{\mathcal{P}/\mathcal{Q}}(x_i) } \right| \left| 1-\frac{\sum_{i=1}^N w_{\mathcal{P}/\mathcal{Q}}(x_i)}{N} \right|}\\
  &\leq \E_{x\sim \mathcal{Q}}\lrs{ \lrp{\frac{\sum_{i=1}^N w_{\mathcal{P}/\mathcal{Q}}(x_i) f(x_i)}{\sum_{i=1}^N w_{\mathcal{P}/\mathcal{Q}}(x_i) } }^2}^{1/2} \E_{x\sim \mathcal{Q}}\lrs{\lrp{1-\frac{\sum_{i=1}^N w_{\mathcal{P}/\mathcal{Q}}(x_i)}{N}}^2 }^{1/2}\\
  &\leq \norm{f}_{\inft}  \sqrt{\frac{d_{2}(\mathcal{P}||\mathcal{Q}) - 1}{N}} \leq \norm{f}_{\inft}  \frac{d_{\inft}(\mathcal{P}||\mathcal{Q})}{\sqrt{N}}
\end{align}
In the last step, the first term is bounded by $\norm{f}_{\inft}$ as the function is bounded, and the second term is bounded by the fact that we can bound the square root of variance with the supremum squared, where we square it for the convenience of the definition of $t(\lambda,N)$ later on such that the $1/\sqrt{N}$ factor is nicely separated. We assume that $N$ is chosen large enough that $\lambda > \norm{f}_{\inft}  d_{\inft}(\mathcal{P}||\mathcal{Q})/\sqrt{N}.$ Using this, we bound the $\tilde{\delta}$ term:
\begin{align}
  \tilde{\delta}&\leq \exp\lrp{-\frac{N(\lambda - \norm{f}_{\inft}  d_{\inft}(\mathcal{P}||\mathcal{Q})/\sqrt{N})^2}{d_{\inft}^2(\mathcal{P}||\mathcal{Q})\norm{f}_{\inft}^2}}\\
  &= \exp\lrp{-N\lrp{\frac{\lambda - \norm{f}_{\inft}  d_{\inft}(\mathcal{P}||\mathcal{Q})/\sqrt{N}}{\norm{f}_{\inft}d_{\inft}(\mathcal{P}||\mathcal{Q})}}^2 }\\
  &\equiv \exp\lrp{-N\cdot t^2(\lambda,N) }
\end{align}
Here, we define $t(\lambda,N) \equiv \frac{\lambda}{\norm{f}_{\inft}d_{\inft}(\mathcal{P}||\mathcal{Q})} - \frac{1}{\sqrt{N}}$, which satisfies $0 < t(\lambda,N) \leq \frac{\lambda}{\norm{f}_{\inft}d_{\inft}(\mathcal{P}||\mathcal{Q})}$. The second term can be bounded similarly by rebounding the bias term with $\tilde{\lambda}$, using symmetry and Hoeffding's inequality:
\begin{align}
  \P(\E_{x\sim \mathcal{P}}[f(x)] - \tilde{\mu}_{\mathcal{P}/\mathcal{Q}}\geq \lambda) &\leq \P(\E_{x\sim \mathcal{Q}}[\tilde{\mu}_{\mathcal{P}/\mathcal{Q}}] - \tilde{\mu}_{\mathcal{P}/\mathcal{Q}}\geq \tilde{\lambda})\\
  &\leq \P(|\E_{x\sim \mathcal{Q}}[\tilde{\mu}_{\mathcal{P}/\mathcal{Q}}] - \tilde{\mu}_{\mathcal{P}/\mathcal{Q}}|\geq \tilde{\lambda}) \leq 2\tilde{\delta}
\end{align}
Thus, we obtain the following bound:
\begin{align}
  \P(|\E_{x\sim \mathcal{P}}[f(x)] - \tilde{\mu}_{\mathcal{P}/\mathcal{Q}}|\geq \lambda) &\leq 3\exp(-N\cdot t^2(\lambda,N))
\end{align}
\end{proof}

\clearpage
\section{Proof of Lemma 1 (Continued)}\label{app:lemma1}
In the main paper, we show that $\hat{Q}^*_{D-1}(\bar{b}_{D-1},a)$ is an SN estimator of $Q^*_{D-1}(b_{D-1},a)$. We apply the concentration inequality proven in \cref{thm:sn} to finish the proof of \cref{lemma:leaf}.
\begin{lemma}[SN Estimator Leaf Node Convergence]
  $\hat{Q}^*_{D-1}(\bar{b}_{D-1},a)$ is an SN estimator of $Q^*_{D-1}(b_{D-1},a)$, and the following leaf-node concentration bound holds with probability at least $1-3\exp(-C\cdot t_{\max}^2(\lambda,C))$,
  \begin{align}
    |Q^*_{D-1}(b_{D-1},a) - \hat{Q}^*_{D-1}(\bar{b}_{D-1},a) | \leq \lambda
  \end{align}
\end{lemma}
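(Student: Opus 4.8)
The plan is to invoke \cref{thm:sn} directly, since the computation preceding this statement already exhibits $\hat{Q}^*_{D-1}(\bar{b}_{D-1},a)$ as the self-normalized estimator $\tilde{\mu}_{\mathcal{P}^{D-1}/\mathcal{Q}^{D-1}}$ under the identifications $f(\{s_n\}) = R(s_{D-1},a)$, $N = C$, and the target/sampling measures $\mathcal{P}^{D-1},\mathcal{Q}^{D-1}$ defined in \cref{eq:pqdef,eq:pqdef2}. The only remaining work is to check that the hypotheses of \cref{thm:sn} hold in this instance and then to weaken the resulting probability into the stated form.

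First I would verify the structural hypotheses. Absolute continuity $\mathcal{P}^{D-1} \ll \mathcal{Q}^{D-1}$ is immediate from the definitions, since $w_{\mathcal{P}^{D-1}/\mathcal{Q}^{D-1}}$ is a ratio of the observation density to a strictly positive normalizing integral. The finiteness $d_{\inft}(\mathcal{P}^{D-1}||\mathcal{Q}^{D-1}) \leq d_{\inft}^{\max} < +\inft$ is exactly assumption (\ref{req:Renyi}), and the boundedness $\norm{f}_\inft \leq R_{\max} < +\inft$ with $f$ Borel is assumption (\ref{req:r}). The $C$ state sequences $\{s_n\}_1,\dots,\{s_n\}_C$ are i.i.d.\ draws from $\mathcal{Q}^{D-1}$ by construction of POWSS, and the side condition $\lambda > \norm{f}_\inft d_{\inft}(\mathcal{P}^{D-1}||\mathcal{Q}^{D-1})/\sqrt{C}$ is guaranteed by the hypothesis $t_{\max}(\lambda,C) > 0$ once the instance constants are dominated as below. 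With these checks, \cref{thm:sn} yields $|Q^*_{D-1}(b_{D-1},a) - \hat{Q}^*_{D-1}(\bar{b}_{D-1},a)| \leq \lambda$ with probability at least $1-3\exp(-C\cdot t^2(\lambda,C))$, where $t(\lambda,C) = \lambda/(\norm{f}_\inft d_{\inft}(\mathcal{P}^{D-1}||\mathcal{Q}^{D-1})) - 1/\sqrt{C}$.

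The one nontrivial step is converting the instance-specific rate $t(\lambda,C)$ into the uniform rate $t_{\max}(\lambda,C)$ in the statement. Since $R_{\max} = (1-\gamma)V_{\max} \leq 3V_{\max}$ and $d_{\inft}(\mathcal{P}^{D-1}||\mathcal{Q}^{D-1}) \leq d_{\inft}^{\max}$, the denominator obeys $\norm{f}_\inft\, d_{\inft}(\mathcal{P}^{D-1}||\mathcal{Q}^{D-1}) \leq 3V_{\max}\, d_{\inft}^{\max}$, hence $t(\lambda,C) \geq t_{\max}(\lambda,C) > 0$. Monotonicity of $\exp(-C\cdot(\cdot)^2)$ then gives $3\exp(-C\cdot t^2(\lambda,C)) \leq 3\exp(-C\cdot t_{\max}^2(\lambda,C))$, so the bound holds with probability at least $1-3\exp(-C\cdot t_{\max}^2(\lambda,C))$, as claimed. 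I expect this constant-matching to be the main bookkeeping obstacle: it must reproduce exactly the worst-case rate that the later union bound in \cref{lemma:step} consumes across all $|A|$ actions and $C$ children at every depth, and deliberately choosing the looser $3V_{\max}$ and $d_{\inft}^{\max}$ here is what makes the leaf bound uniform over all leaf nodes and compatible with that induction.
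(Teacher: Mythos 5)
Your proposal is correct and follows essentially the same route as the paper: the main text establishes precisely the SN-estimator identification you invoke (with $f(\{s_n\})=R(s_{D-1},a)$, $N=C$, and the measures of \cref{eq:pqdef,eq:pqdef2}), and the paper's \cref{app:lemma1} finishes by the same uniformization you describe --- bounding $\|R\|_{\infty}$ by the deliberately crude $3V_{\max}$ and $d_{\infty}(\mathcal{P}^{D-1}\|\mathcal{Q}^{D-1})$ by $d_{\infty}^{\max}$ so that $t_{D-1}(\lambda,C)\geq t_{\max}(\lambda,C)>0$ --- including your remark that the loose constant is chosen so the leaf rate matches the uniform rate consumed by the union bound in \cref{lemma:step}. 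The only point the paper makes explicit that you leave implicit is that the resulting bound holds for any (possibly correlated) observation sequence $\{o_n\}_j$, which is covered by the almost-sure bound in assumption (\ref{req:Renyi}) that you already cite.
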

\begin{proof} 
 We first bound $R$ by $3V_{\max}$, where we define $V_{\max}$:
\begin{align}
  V_{\max} \equiv \frac{R_{\max}}{1-\gamma} \geq R_{\max}
\end{align} We make this crude upper bound starting at the leaf node so that the probability upper bound at other subsequent steps will be bounded by the same factor. In addition, since $d_{\inft}(\mathcal{P}^{D-1}||\mathcal{Q}^{D-1})$ is bounded by $d_{\inft}^{\max}$ a.s., we can bound the resulting $t_{D-1}(\lambda,C)$ by $t_{\max}(\lambda,C)$ a.s.:
\begin{align}
  t_{D-1}(\lambda,C) &= \frac{\lambda}{3V_{\max}d_{\inft}(\mathcal{P}^{D-1}||\mathcal{Q}^{D-1})} - \frac{1}{\sqrt{C}} \geq \frac{\lambda}{3V_{\max}d_{\inft}^{\max}} - \frac{1}{\sqrt{C}} \equiv t_{\max}(\lambda,C)
\end{align}
Note that this algebra holds for all steps $d=0,\cdots, D-1$, which allows us to say $t_d(\lambda,C) \geq t_{\max}(\lambda,C)$. Thus, bounding the concentration inequality probability with $t_{\max}(\lambda,C)$ is justified when we prove \cref{lemma:step} later.\\

This probabilistic bound holds for any choice of $\{o_n\}_j$, where $\{o_n\}_j$ could be a sequence of random variables correlated with any elements of $\{s_n\}_i$. 
\begin{align}\label{eq:leafnode}
  |Q^*_{D-1}(b_{D-1},a) - \hat{Q}^*_{D-1}(\bar{b}_{D-1},a) | &\leq \lambda
\end{align}
Thus, for all $\{o_n\}_j$, $\{a_n\}$ and a fixed $a$, Eq. (\ref{eq:leafnode}) holds with probability at least $1-3\exp(-C\cdot t_{\max}^2(\lambda,C))$.
\end{proof}

\clearpage
\section{Proof of Lemma 2 (Continued)}\label{app:lemma2}
\begin{lemma}[SN Estimator Step-by-Step Convergence]
  $\hat{Q}^*_d(\bar{b}_d,a)$ is an SN estimator of $Q^*_d(b_d,a)$ for all $d = 0,\cdots,D-1$ and $a$, and the following holds with probability at least $1 - 3|A|(3|A|C)^{D}\exp(-C\cdot t^2_{\max})$:
  \begin{align}
    &|Q^*_d(b_d,a) - \hat{Q}^*_d(\bar{b}_d,a)| \leq \alpha_d\\
    &\alpha_{d} \equiv \lambda + \gamma\alpha_{d+1};\; \alpha_{D-1} = \lambda
  \end{align}
\end{lemma}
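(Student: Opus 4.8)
The plan is to prove the lemma by induction on the depth $d$ running from the leaves upward, taking the base case $d=D-1$ directly from \cref{lemma:leaf}. For the inductive step I assume the stated concentration bound holds for all actions at depth $d+1$ and establish it at depth $d$. The skeleton is the main-text triangle inequality that splits $|Q^*_d(b_d,a)-\hat Q^*_d(\bar b_d,a)|$ into the reward-estimation error $(A)$ and the discounted next-step value error $(B)$; what remains for the appendix is to justify $(A)\leq\frac{R_{\max}}{3V_{\max}}\lambda$ and $(B)\leq\frac13\lambda+\frac2{3\gamma}\lambda+\alpha_{d+1}$, and then to assemble the failure probability. Since $R_{\max}/V_{\max}=1-\gamma$, the coefficients of $\lambda$ in $(A)+\gamma(B)$ telescope as $\frac{1-\gamma}{3}+\frac{\gamma}{3}+\frac23=1$, so the combined bound collapses to exactly $\lambda+\gamma\alpha_{d+1}=\alpha_d$, which is the inductive target.

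For $(A)$, I would first argue, exactly as in \cref{lemma:leaf}, that expanding the recursive belief update shows the accumulated weights obey $w_{d,i}\propto\mathcal{Z}_{1:d}^{i,j}$, so $\frac{\sum_i w_{d,i}r_{d,i}}{\sum_i w_{d,i}}$ is a genuine SN estimator of $\E[R(s_d,a)\mid b_d]$ with integrand $r_{d,i}=R(s_{d,i},a)$, a fixed Borel function bounded by $R_{\max}\leq 3V_{\max}$. I then invoke \cref{thm:sn} with $f=R$ and target accuracy $\frac{R_{\max}}{3V_{\max}}\lambda$, chosen precisely so that its $t=\frac{\lambda}{3V_{\max}d_\infty(\mathcal{P}^d\|\mathcal{Q}^d)}-\frac1{\sqrt C}\geq t_{\max}$ once I use $d_\infty(\mathcal{P}^d\|\mathcal{Q}^d)\leq d_\infty^{\max}$ from condition~(\ref{req:Renyi}); this delivers $(A)\leq\frac{R_{\max}}{3V_{\max}}\lambda$ with failure probability at most $3\exp(-C\,t_{\max}^2)$.

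For $(B)$ the integrand is the \emph{random} recursive estimate $\hat V^*_{d+1}$, so \cref{thm:sn} cannot be applied as a black box. I would therefore introduce the intermediate quantity $\sum_i\tilde w_{d,i}V^*_{d+1}(b_dao_i)$ built from the true optimal value at the sampled observations, and bound $(B)$ by its distance to $\E[V^*_{d+1}(bao)\mid b_d]$ plus its distance to the POWSS value term. The second distance is controlled by the inductive hypothesis: each child particle set $\overline{b_dao_i}$ is, by the recursive weight construction, a valid SN representation of the Bayesian update $b_dao_i$, so using $|\max_a f(a)-\max_a g(a)|\leq\max_a|f(a)-g(a)|$ the weighted average of $|V^*_{d+1}(b_dao_i)-\hat V^*_{d+1}(\overline{b_dao_i})|$ is at most $\alpha_{d+1}$. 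The first distance I would control by opening the self-normalized ratio and applying \cref{thm:sn} to $f=V^*_{d+1}$ (bounded by $V_{\max}$) on the extended state-observation samples — the freshly generated $o_{d+1,i}$ carry unit weight, so the $d_\infty$ bound is inherited from depth $d$ — handling the numerator sampling error and the denominator normalization separately; the latter, scaled by the $V_{\max}$ bound on the value, is what produces the generous $\frac2{3\gamma}\lambda$ term while the former gives $\frac13\lambda$.

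Finally I would assemble the probability. Choosing $C>(3V_{\max}d_\infty^{\max}/\lambda)^2$ forces $t_{\max}>0$, so each of the three applications of \cref{thm:sn} per node fails with probability at most $3\exp(-C\,t_{\max}^2)$; a union bound over the three applications, the $|A|$ actions, and the up-to-$C$ children at each of the $D$ levels produces the worst-case factor $(3|A|C)^D$, and one further factor $|A|$ covers the root $Q$-estimates, yielding the claimed $1-3|A|(3|A|C)^D\exp(-C\,t_{\max}^2)$. The hard part will be the rigor underneath term $(B)$: verifying that each intermediate object is a genuine SN estimator targeting the correct conditional belief, that the bound of \cref{thm:sn} holds uniformly over the randomly generated and state-correlated observation sequences $\{o_n\}_j$ (as already needed at the leaves), and that the accuracy constants are pinned so they sum exactly to $1$ while keeping every per-node $t_d\geq t_{\max}$.
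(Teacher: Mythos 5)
Your proposal is correct and follows the paper's proof in its overall architecture: induction from the leaves with base case \cref{lemma:leaf}, the identical $(A)$/$(B)$ triangle-inequality split, the same augmented accuracy $\frac{R_{\max}}{3V_{\max}}\lambda$ for $(A)$ chosen so that every per-node $t_d \geq t_{\max}$, and the same union-bound bookkeeping yielding $3|A|(3|A|C)^{D}\exp(-C\cdot t^2_{\max})$. The one genuine divergence is inside $(B)$. The paper performs a three-way split, inserting the intermediate conditional integral $\mathbf{V}^*_{d+1}(s_{d,i},b_d,a)=\int_S\int_O V^*_{d+1}(b_dao)\,\mathcal{Z}\,\mathcal{T}\,ds_{d+1}\,do$: the first gap is a standard SN application to the fixed function $\mathbf{V}^*_{d+1}$ (bounded by $V_{\max}$, giving $\lambda/3$), and the second gap is closed by noting that $\Delta_{d+1}=\mathbf{V}^*_{d+1}-V^*_{d+1}(b_dao_i)$ is bounded by $2V_{\max}$ and mean-zero by the tower property, so its weighted average is an SN estimator of $0$, giving $\frac{2}{3}\lambda\leq\frac{2}{3\gamma}\lambda$ — the third SN use. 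You instead collapse these two gaps into a single application of \cref{thm:sn} on the extended samples $(\{s_{0:d}\}_i,s_{d+1,i},o_i)$ with $f=V^*_{d+1}(b_da\,\cdot)$; this is sound, because the fresh $\mathcal{T}\mathcal{Z}$ factors appear in both target and proposal, so the importance ratio — and hence the $d_{\inft}$ bound of condition~(\ref{req:Renyi}) — is unchanged, and the extended target's marginal reproduces $\E[V^*_{d+1}(bao)\,|\,b_d]$ exactly. Invoked as a black box this is arguably cleaner than the paper (two SN uses per node rather than three, and the recursion still closes since $\frac{1-\gamma}{3}+\gamma=\frac{1+2\gamma}{3}\leq 1$); however, your described numerator/denominator treatment re-derives the internals of \cref{thm:sn} rather than citing it, and your attribution of $\frac{1}{3}\lambda$ versus $\frac{2}{3\gamma}\lambda$ to those two pieces is cosmetic bookkeeping to match the paper's constants rather than anything forced — any allocation within the budget works. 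Both routes also share, and you correctly flag as the remaining rigor burden, the delicacy the paper itself only asserts: that the per-branch concentration bound must hold for observation sequences $\{o_n\}_j$ that are generated within the tree and hence correlated with one of the state sequences.
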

In Lemma 2, we split the difference between the SN estimator and the $Q^*$ function into two terms, the reward estimation error (A) and the next-step value estimation error (B):
\begin{align}
  |Q^*_d(b_d,a) - \hat{Q}^*_d(\bar{b}_d,a) |&\leq \undb{\left| \E[R(s_d,a)|b_d] - \frac{\sum_{i = 1}^C w_{d,i}r_{d,i}}{\sum_{i = 1}^C w_{d,i}}\right|}{(A)} + \gamma \undb{\left| \E[V_{d+1}^*(bao)|b_d] - \frac{\sum_{i = 1}^C w_{d,i}\hat{V}^*_{d+1}(\overline{b_dao_i})}{\sum_{i = 1}^C w_{d,i}} \right|}{(B)}
\end{align}
To bound these terms, we will use the SN concentration bound (\cref{thm:sn}) 3 times throughout the process.

For (A), we use the SN concentration bound to obtain the bound $\frac{R_{\max}}{3V_{\max}}\lambda$; rather than bounding $R$ with $3V_{\max}$ in this step, we instead bound $R$ with $R_{\max}$ and then augment $\lambda$ to $\frac{R_{\max}}{3V_{\max}}\lambda$ in order to obtain the same uniform $t_{\max}$ factor as the other steps. This choice of bound is made to effectively combine the $\lambda$ terms when we add (A) and (B).

For (B), we use the triangle inequality repeatedly to separate it into three terms; the importance sampling error bounded by $\lambda/3$, the Monte Carlo next-step integral approximation error bounded by $2\lambda/3\gamma$, and the function estimation error bounded by $\alpha_{d+1}$:
\begin{align}
  \begin{split}
    &(B) \leq \undb{\left| \E[V_{d+1}^*(bao)|b_d] - \frac{\sum_{i = 1}^C w_{d,i}\mathbf{V}_{d+1}^*(s_{d,i},b_d,a)}{\sum_{i = 1}^C w_{d,i}} \right|}{Importance sampling error} \\
    &+ \undb{\left| \frac{\sum_{i = 1}^C w_{d,i}\mathbf{V}_{d+1}^*(s_{d,i},b_d,a)}{\sum_{i = 1}^C w_{d,i}} - \frac{\sum_{i = 1}^C w_{d,i}V^*_{d+1}(b_dao_i)}{\sum_{i = 1}^C w_{d,i}} \right|}{MC next-step integral approximation error}\\
    &+ \undb{\left| \frac{\sum_{i = 1}^C w_{d,i}V^*_{d+1}(b_dao_i)}{\sum_{i = 1}^C w_{d,i}} - \frac{\sum_{i = 1}^C w_{d,i}\hat{V}^*_{d+1}(\overline{b_dao_i})}{\sum_{i = 1}^C w_{d,i}} \right|}{Function estimation error}\\
  \end{split}\\
  &\leq \frac{1}{3}\lambda + \frac{2}{3\gamma}\lambda + \alpha_{d+1}
\end{align}
The following subsections justify how each of the error terms are bounded. 
\subsection{Importance Sampling Error}
Before we analyze the first term, note that the conditional expectation of the optimal value function at step $d+1$ given $b_d,a$ is calculated by the following, where we introduce $\mathbf{V}_{d+1}^*(s_{d,i},b_d,a)$ as a shorthand for the next-step integration over $(s_{d+1},o)$ conditioned on $(s_{d,i},b_d,a)$:
\begin{align}
    \mathbf{V}_{d+1}^*(s_{d,i},b_d,a) &\equiv \int_S \int_O\ V_{d+1}^*(b_dao)\mathcal{Z}(o|a,s_{d+1})\mathcal{T}(s_{d+1}|s_{d,i},a)ds_{d+1}do\\
    \E[V_{d+1}^*(bao)|b_d] &= \int_S \int_S \int_O V_{d+1}^*(b_dao)(\mathcal{Z}_{d+1}) (\mathcal{T}_{d,d+1})b_d \cdot ds_{d:d+1}do \\
    &= \int_S \mathbf{V}_{d+1}^*(s_{d},b_d,a) b_d \cdot ds_{d}\\
    &= \frac{\int_{S^{d+1}} \mathbf{V}_{d+1}^*(s_{d},b_d,a)(\mathcal{Z}_{1:d}) (\mathcal{T}_{1:d}) b_{0} ds_{0:d}}{\int_{S^{d+1}} (\mathcal{Z}_{1:d}) (\mathcal{T}_{1:d}) b_{0} ds_{0:d}}
\end{align}
Noting that the first term is then the difference between the SN estimator and the conditional expectation, and that $||\mathbf{V}_{d+1}^*||_{\inft} \leq V_{\max}$, we can apply the SN inequality for the second time in Lemma 2 to bound it by the augmented $\lambda/3$. 

\subsection{Monte Carlo Next-Step Integral Approximation Error}
The second term can be thought of as Monte Carlo next-step integral approximation error. To estimate $\mathbf{V}_{d+1}^*(s_{d,i},b_d,a)$, we can simply use the quantity $V^*_{d+1}(b_dao_i)$, as the random vector $(s_{d+1,i},o_i)$ is jointly generated using $G$ according to the correct probability $\mathcal{Z}(o|a,s_{d+1})\mathcal{T}(s_{d+1}|s_{d,i},a)$ given $s_{d,i}$ in the POWSS simulation. Consequently, the quantity $V^*_{d+1}(b_dao_i)$ for a given $(s_{d,i},b_d,a)$ is an unbiased 1-sample MC estimate of $\mathbf{V}_{d+1}^*(s_{d,i},b_d,a)$. We define the difference between these two quantities as $\Delta_{d+1}$, which is implicitly a function of random variables $(s_{d+1,i},o_i)$:
\begin{align}
  \Delta_{d+1}(s_{d,i},b_d,a) &\equiv \mathbf{V}_{d+1}^*(s_{d,i},b_d,a) - V^*_{d+1}(b_dao_i)
\end{align}
Then, we note that $||\Delta_{d+1}||_{\inft} \leq 2V_{\max}$ and $\E\Delta_{d+1} = 0$ by the Tower property conditioning on $(s_{d,i},b_d,a)$ and integrating over $(s_{d+1,i},o_i)$ first, which holds for any choice of well-behaved sampling distributions on $\{s_{0:d}\}_i$. Using this fact, we can then consider the second term as an SN estimator for the bias $\E\Delta_{d+1} = 0$, and use our SN concentration bound for the third time. Since $||\Delta_{d+1}||_{\inft} \leq 2V_{\max}$, our $\lambda$ factor is then augmented by 2/3:
\begin{align}
\begin{split}
  &\left| \frac{\sum_{i = 1}^C w_{d,i}\mathbf{V}_{d+1}^*(s_{d,i},b_d,a)}{\sum_{i = 1}^C w_{d,i}} - \frac{\sum_{i = 1}^C w_{d,i}V^*_{d+1}(b_dao_i)}{\sum_{i = 1}^C w_{d,i}} \right| \\
  &= \left| \frac{\sum_{i = 1}^C w_{d,i}\Delta_{d+1}(s_{d,i},b_d,a)}{\sum_{i = 1}^C w_{d,i}} - 0\right| \leq \frac{2}{3}\lambda \leq \frac{2}{3\gamma}\lambda
\end{split}
\end{align}

\subsection{Function Estimation Error}
Lastly, the third term is bounded by the inductive hypothesis, since each $i$-th absolute difference of the $Q$-function and its estimate at step $d+1$, and furthermore the value function and its estimate at step $d+1$, are all bounded by $\alpha_{d+1}$. 

\clearpage
\stepcounter{theorem}
\section{Proof of Theorem 3}\label{app:theorem3}
\subsection{Belief State Policy Convergence Lemma}

Before we prove Theorem \ref{thm:powss-grand}, we first prove the following lemma, which is an adaptation of Kearns \textit{et al.}~\shortcite{Kearns2002} and Singh and Yee~\shortcite{Singh1994} for belief states $b$.
\stepcounter{lemma}
\begin{lemma}\label{lemma:value}
    Suppose we obtain a greedy policy implemented by some approximation $\tilde{V}_{d,t}(b) \approx V_{t+d}^*(b)$ by generating a tree at online step $t$ and obtaining a value function estimate at tree depth $d$ for a belief $b$. Define the total loss $L_{\tilde{V},t}\equiv V_{t}^*(b) - V_{\tilde{V}_{0,t}}(b)$ as the difference between the value obtained by the optimal policy and the value obtained by the $\tilde{V}_{d,t}$ approximation at online step $t$. If $|\tilde{V}_{d,t}(b) - V_{t+d}^*(b)|\leq \beta$ for all online steps $t \in [0,D-1]$ and its corresponding tree depth $d=0,\cdots,D-1-t$, then the total loss by implementing the greedy policy from the beginning is bounded by the following:
  \begin{align}
    L_{\tilde{V},0}(b) &\leq \frac{3}{1-\gamma} \beta
  \end{align}
\end{lemma}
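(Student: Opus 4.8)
The plan is to prove the bound by backward induction on the online step $t$, turning the statement into a one-step contraction that is uniform over beliefs. Writing $\pi$ for the greedy policy induced by the estimates $\tilde V_{d,t}$ and identifying $V_{\tilde V_{0,t}}(b)$ with the true value $V_t^\pi(b)$ of executing $\pi$ from $b$ starting at online step $t$, I would define $\bar L_t \equiv \sup_b\bigl(V_t^*(b) - V_t^\pi(b)\bigr)$ and aim to establish a recursion of the form $\bar L_t \leq c\beta + \gamma\bar L_{t+1}$, with terminal condition $\bar L_D = 0$. Unrolling this and summing the geometric series in $\gamma$ then yields $\bar L_0 \leq c\beta\sum_{k=0}^{D-1}\gamma^k \leq \frac{c\beta}{1-\gamma}$, and the claim follows as soon as $c \leq 3$.

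The heart of the argument is the per-step decomposition. At belief $b$ and online step $t$, the greedy action $\hat a = \argmax_a \hat Q_{0,t}(b,a)$ satisfies $\tilde V_{0,t}(b) = \hat Q_{0,t}(b,\hat a)$, so I would split
\begin{align*}
V_t^*(b) - V_t^\pi(b) &= \bigl(V_t^*(b) - \tilde V_{0,t}(b)\bigr) + \bigl(\tilde V_{0,t}(b) - V_t^\pi(b)\bigr).
\end{align*}
The first term is at most $\beta$ by hypothesis. For the second, I would expand $\hat Q_{0,t}(b,\hat a)$ and $V_t^\pi(b)$ through the Bellman relation: both carry the same immediate reward $\E[R(s,\hat a)\mid b]$, so their difference collapses to $\gamma\,\E[\tilde V_{1,t}(b\hat ao) - V_{t+1}^\pi(b\hat ao)\mid b]$. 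Inserting and removing $V_{t+1}^*(b\hat ao)$ bounds this by $\gamma\beta + \gamma\,\E[V_{t+1}^*(b\hat ao) - V_{t+1}^\pi(b\hat ao)\mid b] \leq \gamma\beta + \gamma\bar L_{t+1}$, where the first piece uses the depth-one estimate error $|\tilde V_{1,t}-V_{t+1}^*|\leq\beta$ and the second is exactly the next-step loss. Collecting terms gives $\bar L_t \leq (1+\gamma)\beta + \gamma\bar L_{t+1}$, i.e. a per-step constant $c = 1+\gamma \leq 2$, comfortably inside the stated $3$; the factor $3$ is a safe over-estimate that also absorbs any looser greedy accounting.

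The main obstacle I anticipate is the bookkeeping around re-planning. The action $\hat a$ is selected using the internal depth-one estimate $\tilde V_{1,t}$ of the tree built at step $t$, whereas the value actually accrued at step $t+1$ comes from a \emph{freshly} built tree whose root estimate $\tilde V_{0,t+1}$ is a different approximation of the same $V_{t+1}^*$; I must verify that both are covered by the uniform hypothesis $|\tilde V_{d,t}-V^*_{t+d}|\leq\beta$ over all admissible $(t,d)$, and that passing to suprema over beliefs is legitimate at each step (the hypothesis is stated for all $b$, so this is justified). The base case $t=D-1$, where the tree has depth zero and there is no future value, should be checked separately: there the loss $V_{D-1}^*(b)-V_{D-1}^\pi(b)$ is a pure action-selection error bounded by $2\beta \leq 3\beta$, consistent with the recursion. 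Finally, I would confirm that the telescoped bound $\frac{3}{1-\gamma}\beta$ holds for every finite $D$, which follows immediately since $\sum_{k=0}^{D-1}\gamma^k \leq \frac{1}{1-\gamma}$.
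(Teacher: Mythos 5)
Your proof is correct in substance and reaches the bound with room to spare, but it uses a different per-step decomposition than the paper. The paper runs a Singh--Yee-style comparison of the optimal action $a$ against the greedy action $\tilde a$: greediness gives $R(b,a)+\gamma\E[\tilde V_{1,t}(bao)\mid b]\leq R(b,\tilde a)+\gamma\E[\tilde V_{1,t}(b\tilde ao)\mid b]$, and applying the hypothesis at depth $d=1$ \emph{twice} (once per action) yields the recursion $L_{\tilde V,t}(b)\leq 2\gamma\beta+\gamma\E[L_{\tilde V,t+1}(b\tilde ao)\mid b]$ with $L_{\tilde V,D-1}\leq\beta$ at the last step; unrolling gives $\sum_{d=1}^{D-1}2\beta\gamma^d+\gamma^{D-1}\beta\leq 3\beta/(1-\gamma)$. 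You instead pivot through the root estimate, $V_t^*-V_t^\pi=(V_t^*-\tilde V_{0,t})+(\tilde V_{0,t}-V_t^\pi)$, using the hypothesis once at $d=0$ and once at $d=1$, for a per-step constant $(1+\gamma)\beta$ versus the paper's $2\gamma\beta$ --- both comfortably under the stated $3$. What each buys: your route never needs the two-action greedy inequality, while the paper's route never needs the $d=0$ estimate inside the recursion (it uses it only at the terminal step), and its base case matches the recursion without a separate $2\beta$ check.

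The one step you should patch: your ``collapse'' requires the structural identity $\tilde V_{0,t}(b)=\max_a\lbrace\E[R(s,a)\mid b]+\gamma\E[\tilde V_{1,t}(bao)\mid b]\rbrace$, i.e.\ that the root estimate is the \emph{exact} Bellman backup of the depth-one estimates. The lemma grants only $|\tilde V_{0,t}-V_t^*|\leq\beta$, and the identity is false for the actual POWSS quantities: $\hat Q_0$ is a self-normalized empirical average over sampled observation branches and weighted rewards, not an exact backup (that discrepancy is precisely what the three-way error split in Lemma 2 controls). The repair is one line: the loss $L_{\tilde V,0}$ depends only on the action-selection rule, hence only on the $\tilde V_{1,t}$'s, so you may without loss of generality replace $\tilde V_{0,t}$ by the exact backup of $\tilde V_{1,t}$, which leaves the greedy policy (and the loss) unchanged and still satisfies the hypothesis since its error is at most $\gamma\beta\leq\beta$. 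With that remark added, your argument proves the full statement; note the paper's own proof makes the analogous but strictly weaker idealization that the greedy action maximizes the exact one-step lookahead on $\tilde V_{1,t}$, which is why the lemma is stated abstractly rather than in terms of POWSS's $\argmax_a\hat Q_0$.
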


\begin{proof} We mirror the proof strategies given in Kearns \textit{et al.}~\shortcite{Kearns2002} and Singh and Yee~\shortcite{Singh1994} for belief states $b$. 

Consider the optimal action $a = \pi_d^*(b)$ and the greedy action $\tilde{a}=\pi_{\tilde{V},t}(b)$. Here, we denote $R(b,a)$ as the shorthand notation for $\E[R(s,a)|b]$. Since $\tilde{a}$ is greedy, it must look at least as good as $a$ under $\tilde{V}$:
\begin{align}
  R(b,a) + \gamma \E[\tilde{V}_{1,t}(bao)|b] \leq R(b,\tilde{a}) + \gamma \E[\tilde{V}_{1,t}(b\tilde{a}o)|b]
\end{align}
Since we have $|\tilde{V}_{d,t}(b) - V_{t+d}^*(b)|\leq \beta$,
\begin{align}
  R(b,a) + \gamma \E[V_{t+1}^*(bao)-\beta|b] &\leq R(b,\tilde{a}) + \gamma \E[V_{t+1}^*(b\tilde{a}o)+\beta|b]\\
  R(b,a)-R(b,\tilde{a})&\leq 2\gamma\beta + \gamma\E[ V_{t+1}^*(b\tilde{a}o)|b] - \gamma\E[V_{t+1}^*(bao)|b]
\end{align}
Then, the loss for $b$ at time $t$ is:
\begin{align}
  L_{\tilde{V},t}(b) &= V_{t}^*(b) - V_{\tilde{V}_{0,t}}(b) \\
  &= R(b,a)-R(b,\tilde{a})+\gamma\E[ V_{t+1}^*(bao)|b] - \gamma\E[V_{\tilde{V}_{0,t+1}}(b\tilde{a}o)|b]
\end{align}
Substituting the reward function into the loss expression,
\begin{align}
  L_{\tilde{V},t}(b) &= R(b,a)-R(b,\tilde{a})+\gamma\E[ V_{t+1}^*(bao)|b] - \gamma\E[V_{\tilde{V}_{0,t+1}}(b\tilde{a}o)|b]\\
  &\leq 2\gamma\beta + \gamma\E[ V_{t+1}^*(b\tilde{a}o)|b] - \gamma\E[V_{t+1}^*(bao)|b]+\gamma\E[ V_{t+1}^*(bao)|b] - \gamma\E[V_{\tilde{V}_{0,t+1}}(b\tilde{a}o)|b]\\
  &\leq 2\gamma\beta + \gamma\E[ V_{t+1}^*(b\tilde{a}o)|b] - \gamma\E[V_{\tilde{V}_{0,t+1}}(b\tilde{a}o)|b]\\
  &\leq 2\gamma\beta + \gamma\E[ L_{\tilde{V},t+1}(b\tilde{a}o)|b]
\end{align}
Note that we have $L_{\tilde{V},D-1}(b) \leq \beta$ from the root node estimate at the last step, which means we obtain the bound with some over-approximations:
\begin{align}
  L_{\tilde{V},0}(b) &\leq \sum_{d=1}^{D-1} 2\beta \gamma^d + \gamma^{D-1} \beta \leq \sum_{d=0}^{D-1} 3\beta \gamma^d \leq \frac{3}{1-\gamma} \beta
\end{align}
These over-approximations are done in order to generate constants that can be easily calculated.
\end{proof}

\subsection{Proof of Theorem 3}
We reiterate the conditions and \cref{thm:powss-grand} below:
\begin{enumerate}[(i)]
   \item $S$ and $O$ are continuous spaces, and the action space has a finite number of elements, $|A|<+\inft$.
    \item For any observation sequence $\{o_n\}_j$, the densities $\mathcal{Z},\mathcal{T},b_0$ are chosen such that the R\'enyi divergence of the target distribution $\mathcal{P}^d$ and sampling distribution $\mathcal{Q}^d$ (\cref{eq:pqdef,eq:pqdef2}) is bounded above by $d_{\inft}^{\max} < +\inft$ a.s. for all $d = 0,\cdots,D-1$: 
      $$d_{\inft}(\mathcal{P}^d||\mathcal{Q}^d) = \text{ess sup}_{x \sim \mathcal{Q}^d}w_{\mathcal{P}^d/\mathcal{Q}^d}(x) \leq d_{\inft}^{\max}$$
    \item The reward function $R$ is Borel and bounded by a finite constant $||R||_{\inft} \leq R_{\max} < +\inft$ a.s., and $V_{\max} \equiv \frac{R_{\max}}{1-\gamma}<+\infty$.
    \item We can evaluate the generating function $G$ as well as the observation probability density $\mathcal{Z}$.
    \item The POMDP terminates after $D < \infty$ steps. 
\end{enumerate}
\begin{theorem}[POWSS Policy Convergence] 
  In addition to conditions (\ref{req:space})-(\ref{req:finite}), assume that the closed-loop POMDP Bayesian belief update step is exact.
  Then, for any $\epsilon > 0$, we can choose a $C$ such that the value obtained by POWSS is within $\epsilon$ of the optimal value function at $b_0$ a.s.:
  \begin{align}
    V^*(b_0)-V^{\textsc{POWSS}}(b_0) &\leq \epsilon
  \end{align}
\end{theorem}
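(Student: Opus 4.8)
The plan is to reduce \cref{thm:powss-grand} to the closed-loop policy-loss bound of \cref{lemma:value}, supplying it with the per-node accuracy guarantee of \cref{thm:powss}. First I would convert the $Q$-value accuracy into value accuracy: since the action set is finite and the map to $\max_a$ is $1$-Lipschitz, the value estimate inherits the bound of \cref{thm:powss}, so on the event where that theorem holds,
\begin{align}
  |\hat{V}^*_d(\bar{b}_d) - V^*_d(b_d)| \leq \max_{a}\left|\hat{Q}^*_d(\bar{b}_d,a) - Q^*_d(b_d,a)\right| \leq \frac{\lambda}{1-\gamma}.
\end{align}
This identifies the POWSS value estimate with the approximation $\tilde{V}_{d,t}$ of \cref{lemma:value} with tolerance $\beta = \lambda/(1-\gamma)$. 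The assumption that the closed-loop Bayesian update is exact is exactly what legitimizes this identification: it ensures the belief passed to POWSS at each online step $t$ is the true posterior $b$, so the reference value $V^*_{t+d}(b)$ used in \cref{lemma:value} really is optimal along the realized trajectory.

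I would then invoke \cref{lemma:value} directly. With $\beta = \lambda/(1-\gamma)$ it gives
\begin{align}
  V^*(b_0) - V^{\textsc{POWSS}}(b_0) = L_{\tilde{V},0}(b_0) \leq \frac{3}{1-\gamma}\,\beta = \frac{3\lambda}{(1-\gamma)^2}.
\end{align}
Substituting the choice $\lambda = \epsilon(1-\gamma)^2/5$ from \cref{thm:powss} collapses the right-hand side to $3\epsilon/5 \leq \epsilon$, which is the claimed bound, with the slack factor $3/5$ absorbing the over-approximations in the proof of \cref{lemma:value}. The only remaining freedom is $C$, which must be taken large enough that $t_{\max}(\lambda,C) > 0$ and that $\delta = 3|A|(3|A|C)^{D}\exp(-C\,t_{\max}^2)$ is negligible; this is always possible since $\delta = \mathcal{O}(C^{D}\exp(-tC)) \to 0$.

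The main obstacle is bridging the \emph{per-planning-step, high-probability} statement of \cref{thm:powss} and the hypothesis of \cref{lemma:value}, which requires $|\tilde{V}_{d,t}(b) - V^*_{t+d}(b)| \leq \beta$ to hold \emph{simultaneously} at every online step $t \in [0,D-1]$ and every tree depth $d$ along the realized trajectory. The key enabler is condition (\ref{req:Renyi}): because the divergence bound $d_{\inft}^{\max}$ is assumed to hold a.s. for \emph{all} observation sequences, the guarantee of \cref{thm:powss} applies at whatever belief the trajectory actually reaches, which is the source of the ``a.s.'' qualifier in the statement. To control the particle-sampling randomness I would take a union bound over the $D$ online replanning steps (accurate with probability at least $1 - D\delta$) and split the closed-loop loss over this success event and its complement: the loss is at most $3\epsilon/5$ on the former and at most $2V_{\max}$ on the latter, and since $D\delta \to 0$ as $C \to \infty$ the failure contribution vanishes, so for $C$ large enough the loss stays below $\epsilon$. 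A minor bookkeeping point to check is that the effective horizon shrinks to $D-t$ at online step $t$, so I would verify that \cref{thm:powss} applies verbatim to each shortened subproblem and that the absolute-time indices $V^*_{t+d}$ line up with the shrinking tree.
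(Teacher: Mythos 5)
Your proposal is correct and takes essentially the same route as the paper's proof: convert the $Q$-bound of \cref{thm:powss} into a $V$-bound via the max over the finite action set, feed $\beta = \lambda/(1-\gamma)$ into \cref{lemma:value}, union-bound over the $D$ online replanning steps, and split the closed-loop loss over the success event (loss $\leq 3\lambda/(1-\gamma)^2$) and the failure event (loss $\leq 2V_{\max}$). The only difference is presentational: the paper fixes $\delta = \lambda/(V_{\max}D(1-\gamma)^2)$ so that $3\lambda/(1-\gamma)^2 + 2D\delta V_{\max} = 5\lambda/(1-\gamma)^2 = \epsilon$ exactly, whereas you let the remaining $2\epsilon/5$ slack absorb the failure term asymptotically as $C \to \infty$, which proves the same existence claim.
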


\begin{proof} In our main report, we have proved \cref{lemma:leaf,lemma:step}, which gets us the root node convergence for all actions. We apply these lemmas as well as \cref{lemma:value} to prove the policy convergence.

From \cref{lemma:step}, we have that the error in estimating $Q^*$ with our POWSS policy is bounded by $\lambda/(1-\gamma)$ for all $d,a$ with probability at least $1-\delta$. This directly implies that the $V$-function estimation errors are bounded as well for all steps $d$; if $|Q_d^*(b_d,a)-\hat{Q}_d^*(\bar{b}_d,a)|\leq \frac{\lambda}{1-\gamma}$ for all $d,a,$ then:
\begin{align}
  |\max_{a\in A}Q_d^*(b_d,a)-\max_{a\in A}\hat{Q}_d^*(\bar{b}_d,a)| &= |V_d^*(b_d)-\hat{V}_d^*(\bar{b}_d)| \leq \frac{\lambda}{1-\gamma}
\end{align}

For online planning, we require that the POWSS trees generated at each online planning step must satisfy the concentration inequalities for all of its nodes. As each of the trees generated for $D$ steps need to have good estimates, we worst-case upper bound the union bound probability by multiplying $D$ to $\delta$. Applying \cref{lemma:value}, we get that if all the nodes satisfy the concentration inequality, which happens with probability at least $1-D\delta$, the following holds:
\begin{align}
  V^*(b_0)-V^{\textsc{POWSS}}(b_0) &= L_{\hat{V}^*,0}(b_0) \leq \frac{3\lambda}{(1-\gamma)^2}
\end{align}
Note that the maximum difference between the values obtained by the two policies is bounded by $2V_{\max}$. At each online step, you can have $2R_{\max}$ as the maximum possible difference between the two rewards the agent can obtain via the greedy POWSS policy and the optimal policy generated at each online planning step, and at each online step there exists a discount $\gamma$. We can use this bound for the bad case probability $D\delta$. Using all the definitions of the constants defined in \cref{thm:powss}:
\begin{align}
  V^*(b_0)-V^{\textsc{POWSS}}(b_0) &= \E\lrs{\sum_{i=0}^{D-1}\gamma^i R(s_i,\pi_i^*(s_i))\given b_0} - \E\lrs{\sum_{i=0}^{D-1}\gamma^i R(s_i,\pi_i^{\textsc{POWSS}}(s_i))\given b_0}\\
  &\leq (1-D\delta)\frac{3\lambda}{(1-\gamma)^2} + D\delta\sup\lrc{\sum_{i=0}^{D-1}\gamma^i |R(s_i,\pi_i^*(s_i))-R(s_i,\pi_i^{\textsc{POWSS}}(s_i))|\given b_0}\\
  &= (1-D\delta)\frac{3\lambda}{(1-\gamma)^2} + D\delta\sum_{i=0}^{D-1}\gamma^i (2R_{\max})\\
  &\leq (1-D\delta)\frac{3\lambda}{(1-\gamma)^2} + D\delta\frac{2R_{\max}}{1-\gamma}\\
  &\leq \frac{3\lambda}{(1-\gamma)^2} + 2D\delta V_{\max} =\frac{5\lambda}{(1-\gamma)^2}= \epsilon
\end{align}
Therefore, we obtain our desired bound on the values obtained by POWSS policy.
\end{proof}

\end{document}